\documentclass[runningheads]{llncs}
\usepackage[T1]{fontenc}
\usepackage{graphicx}
\usepackage{booktabs}
\usepackage[misc]{ifsym}
\newcommand{\corr}{(\Letter)}

\usepackage{hyperref}
\usepackage{url}
\usepackage{microtype}
\usepackage{graphicx}
\usepackage{subcaption}
\usepackage{multirow}
\usepackage{booktabs} 
\usepackage{hyperref}

\usepackage{amssymb}
\usepackage{mathtools}
\usepackage[capitalize,noabbrev]{cleveref}
\usepackage{comment}

\usepackage[textsize=tiny]{todonotes}
\usepackage{wrapfig}

\usepackage{booktabs}
\usepackage{multirow}
\usepackage{subcaption}
\usepackage[table]{xcolor}
\usepackage{siunitx}

\begin{document}

\title{Geometric and Information Compression of Representations in Deep Learning}

\titlerunning{Geometric and Information Compression of Representations}

\author{Linara Adilova\inst{1} \corr \and
Henning Petzka\inst{2} \and
Asja Fischer\inst{2} \and
Bernhard C. Geiger\inst{3,4,5}}

\authorrunning{L. Adilova et al.}

\institute{Research Center Trustworthy Data Science and Security, Dortmund\\ \email{linara.adilova@tu-dortmund.de}
\and
Faculty of Computer Science, Ruhr University Bochum \\ \email{henning.petzka/asja.fischer@rub.de}
\and
Signal Processing and Speech Communication Laboratory, Graz University of Technology\\
\and
Know Center Research GmbH\\
\and
Graz Center for Machine Learning\\
\email{geiger@tugraz.at}}

\maketitle              

\begin{abstract}
Deep neural networks transform input data into latent representations that support a wide range of downstream tasks.
These representations can be characterized along information-theoretic and geometric dimensions, but their relationship remains poorly understood.
A central open question is whether low mutual information (MI) between inputs and representations necessarily implies geometrically compressed latent spaces and vice versa.
We investigate this question using class-wise clustering as a measure of geometric compression and theoretically sound MI estimation in conditional entropy bottleneck (CEB) networks and continuous dropout networks.
We evaluate the interplay between MI, geometric compression, and generalization on classification tasks under controlled noise injection schemes.
Our findings show that low MI does not reliably correspond to geometric compression, and that the connection between the two is more nuanced than often assumed.
Indeed, our experiments reveal a negative and nonlinear relationship that can reverse when varying training setup.
Our results put forward a hypothesis that generalization acts as a potential confounder in this connection rather than being their direct consequence.
Dataset: \url{https://doi.org/10.17877/RCTRUST-2026-DWMJTZ}.
Code: \url{https://github.com/link-er/information_geometric_compression}.

\keywords{representation learning \and mutual information \and neural collapse}
\end{abstract}

\section{Introduction}
\label{introduction}

The quest for understanding generalization in deep neural networks (DNNs) inspires many researchers to propose different approaches to this important problem~\cite{keskar2016large,dziugaite2017computing,jiangfantastic,liang2019fisher,petzka2021relative}.
It is intuitively evident that the ability to generalize depends on how the DNN transforms input data into latent representations at hidden layers, i.e., on the properties of latent representations, as well as how consistent these properties when changing from training dataset to the unseen data.
With the rise of foundation models, research on representation properties has gained increasing attention, as their effectiveness heavily depends on the characteristics of the latent space.
\begin{figure}
    \centering
         \begin{subfigure}[b]{0.45\textwidth}
         \centering
         \includegraphics[width=\linewidth]{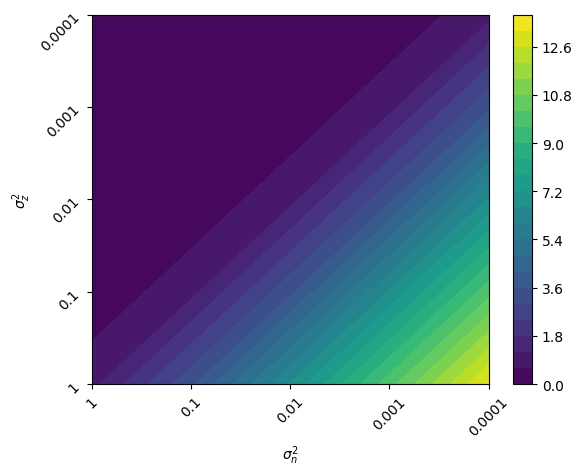}
         \caption{$I(X;Z|Y)$}
     \end{subfigure}
     \hfill
              \begin{subfigure}[b]{0.45\textwidth}
         \centering
         \includegraphics[width=\linewidth]{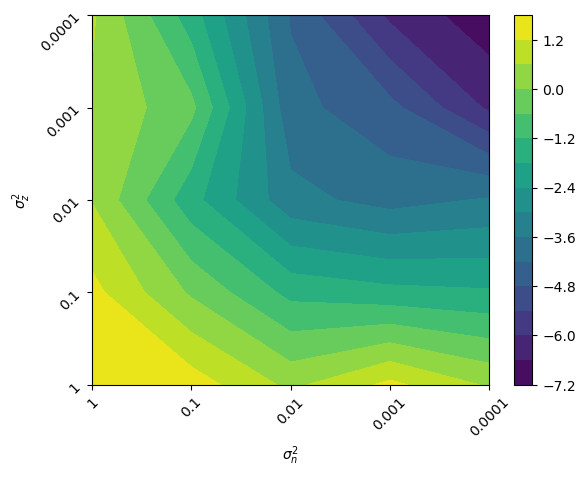}
         \caption{$\log(NC)$ (neural collapse NC1)}
     \end{subfigure}
    \caption{Toy model illustrating the interplay between information-theoretic and geometric compression in terms of neural collapse. 
Data points $x \in \mathcal{X}$ with class labels $y \in \mathcal{Y}$ are encoded into latent representations 
$Z \sim \mathcal{N}(\mu(x), \sigma_n^2 I)$, where class centroids $\mu_y \sim \mathcal{N}(0,I)$ are perturbed by encoder spread $\sigma_z^2$ and noise variance $\sigma_n^2$. 
We vary $\sigma_z^2$ and $\sigma_n^2$ and estimate  geometric compression via neural collapse, averaged over $50$ trials. $I(X;Z|Y)$ is evaluated analytically.
Results show that low MI arises either from strong noise ($\sigma_n^2$ large) or from tightly clustered encodings ($\sigma_z^2$ small), while neural collapse measure indicates compression only in the latter case. 
This demonstrates why low MI and neural collapse do not have to coincide. For the details on the toy example see Appendix~\ref{sec:app:numerical_example}.
}
    \label{fig:theoretical}
\end{figure}
There have been efforts to analyze the geometrical properties of latent representations, for example, by investigating the manifold formed by them.
Following Occam's razor, for classification problems it is natural to seek representations that lie on separated, low-dimensional manifolds associated with different classes.
Low intrinsic dimension as an estimate of the manifold dimension in latent space has been suggested to correlate with good generalization performance~\cite{blier2018description}.
Among the proposed ways to capture geometric compression, one of the most actively studied is neural collapse, which characterizes the class-specific clustering of latent representations and its connection to generalization~\cite{papyan2020prevalence}.

Taking a probabilistic point of view where the input data is drawn from a distribution, one can denote latent representation as a random variable with distribution implicitly described by the DNN.
Again following Occam’s razor, the information bottleneck (IB) theory favors representations that have a large mutual information (MI) with the target but small MI with the input.
Compressed MI with inputs indicates that irrelevant input details are discarded and overfitting is avoided.
While training with the standard cross-entropy loss ensures that the MI between the latent representation and the target is large, it was claimed (and later disputed by \cite{Saxe_IBTheory}) that stochastic gradient descent implicitly reduces the MI between latent representation and input~\cite{Tishby_BlackBox}.

Prior work has suggested a close connection between clustering of representations and information-theoretic compression as defined by the IB principle.
\cite{goldfeld2019estimating} investigated estimates of MI in DNNs with additive Gaussian noise and concluded that  a reduction of MI throughout training is correlated with tightening of the clusters of latent representations.
\cite{Geiger_IPAReview} further argues that many MI estimators are inherently geometric.
In this work, we contribute to this debate and investigate the interplay between geometric and information-theoretic compression using previously unexplored empirical setups, also conjecturing a link to generalization.
In Section~\ref{sec:methodology} we describe the setups used for empirical evaluation: DNNs trained with conditional entropy bottleneck (CEB)~\cite{fischer2020conditional}, i.e., models with additive data-dependent noise, and Gaussian dropout DNNs, i.e., models with multiplicative fixed noise.
For the latent representations of both types of models we estimate the MI with the input (Section~\ref{sec:methodology:MI}) and compute a measure for characterizing class-wise clustering tightness originated from neural collapse (Section~\ref{sec:methodology:NC}).
In contrast to information plane analyses~\cite{Saxe_IBTheory,goldfeld2019estimating} which track MI throughout training, we focus on end-of-training representations.
This choice is deliberate: end-of-training values reflect the stable, converged properties of the learned representation and allow direct comparison across models with different architectures, datasets, and hyperparameters.
Our contributions are:
\begin{enumerate}
    \item We show that DNNs with continuous dropout form a test bed for information-theoretic analyses by proving that the MI between input and latent representations is always finite (Theorem~\ref{thm:finiteMI}), complementing the existing proof for ReLU networks in~\cite{adilovainformation};
    \item We perform large-scale analyses estimating  MI between inputs and representations in state-of-the-art neural networks;
    \item We perform detailed investigations of the connection between information compression and geometric compression in terms of label-specific clustering.
\end{enumerate}

In Section~\ref{sec:experiments}, we show for $8$ architectures (including fully connected DNNs, convolutional DNNs, and transformers) and around $500$ models trained on $5$ different datasets (including vision and NLP tasks) that the connection between geometric and information-theoretic compression is not simply linear.
To illustrate why, we first consider a toy example (Fig.~\ref{fig:theoretical}) in a controlled setting, where latent representations are affected either by noise or by encoder spread.
The example shows that low MI can arise from either source, but geometric compression, in the neural-collapse sense, only occurs for tightly clustered encodings, highlighting why MI and neural collapse do not always coincide.
Motivated by this distinction, we turn to a systematic empirical study.
Studying the interaction between MI and NC empirically requires obtaining models whose representations differ meaningfully along both axes.
We therefore employ regularization schemes during training designed to induce such differences.
However, these representational characteristics are tightly coupled with generalization ability.
As a result, varying MI and NC empirically necessitates considering models that are slightly overfitting or slightly underfitting, while discarding configurations that fail to reach sufficient training accuracy; without this range of generalization behavior, we cannot obtain the variation in MI and NC needed for our analysis.
Across these experiments, we find that MI and NC are negatively but non-linearly correlated.
Moreover, certain training hyperparameter regimes flip this correlation to positive, suggesting an unknown confounder underlying the relationship between geometric and information compression.
We conjecture, but leave for future work to investigate, that this confounder may be related to the generalization ability of the DNN.

\section{Related Work}
\label{sec:related_work}

The connection between representations compression and generalization is explored in multiple research works.
\cite{Tishby_BlackBox} claim that compression of MI between input and latent representation is the implicit regularization of stochastic gradient descent and see it as an explanation of generalization abilities of DNNs.
In its turn, in DNNs with additive noise~\cite{goldfeld2019estimating}, the tightening of class clusters coincides with a reduction in MI between inputs and representations over the course of training.
Similarly, \cite{Patel_LocalRank} show that the local rank of the representations reduces throughout training and connect this to the Gaussian IB~\cite{chechik2003information}.
These works thus suggest a tight link between geometric and information-theoretic compression, which in its turn leads to good generalization. Indeed, for DNNs with additive noise it was shown that neural collapse leads to both grokking and to compression in the sense of IB~\cite{Sakamoto_Grokking}.
Although one might expect that tighter clusters of representations always correspond to reduced MI, this intuition is misleading.
Cluster tightness and MI quantify different properties, and our numerical toy example (Fig.~\ref{fig:theoretical}) demonstrates that clusters can become more compact without any decrease in MI.
More generally, compressibility in a geometric sense is used by algorithms that extract low-dimensional manifolds while preserving relevant information, such as \cite{globerson2003sufficient} or \cite{marx2022estimating}, which indicates that reducing geometrical dimension does not necessarily lead to information reduction.
\cite{Geiger_IPAReview} thus attribute the link between MI and geometric compression to the inherently geometric properties of some MI estimators, while noting that the connection to generalization remains unclear.
In particular, it is established that estimating MI using binning estimators directly reduces the estimate to a measure of geometric compression, cf.~\cite[Fig.~2]{Geiger_IPAReview}.

In contrast to the aforementioned studies that follow MI and clustering throughout training, end-of-training studies focus on representations after convergence of the model.
Whether low MI implies geometric compression is a question about the properties of a converged model, not about training dynamics.
The final MI reflects the information content of the learned representation regardless of the path taken to reach it.
\cite{skean2024does} analyze kernel-based estimates of entropy of language model representations and find that intermediate layers exhibit lower entropy and higher downstream task performance, linking entropy compression to generalization.
\cite{cheng2023bridging} observe a positive correlation between perplexity and the intrinsic dimension of last-token representations in transformers and connect both measures to the performance of finetuning.
None of the previous works embark on estimating MI between inputs and representations, but use some approximations of the amount of information in the representation.

\paragraph{Geometric Characterization of the Representation Space}
A central challenge in studying geometric compression is the lack of a clear definition.
As a result, most works operationalize geometric compression through proxies, such as representations' clustering structure in classification tasks.
Empirically, it has been observed that the penultimate layer representations of well-trained, state-of-the-art classification DNNs collapse such that each class maps to a single point, with these points arranged at the vertices of a simplex equiangular tight frame.
This tight class-wise clustering was termed neural collapse~\cite{papyan2020prevalence} and is a sufficient condition for generalization since it implies linear separability.
However, it is not a necessary condition~\cite{ting2025flatness}.
As an example, DNNs such as RevNets which are reversible~\cite{gomez2017reversible} or Parseval models with orthogonality enforced on weights~\cite{cisse2017parseval} have very limited capabilities for neural collapse, but still achieve state-of-the-art generalization performance.
To date, efforts to relate information-theoretic compression to neural collapse have not yielded conclusive results.
Another line of work indicates that the effective rank of learned representations serves as a generalization predictor in self-supervised and feature-learning settings~\cite{garrido2023rankme},\cite{agrawal2022alpha},\cite{patel2026learning}.

\paragraph{Mutual Information in DNNs}
Estimating MI between input data and representations in deterministic models with continuous distributions is provably vacuous, since this MI is infinite~\cite{amjad2019learning}.
One way to address this is to inject noise into the representations, making them stochastic.
Noise injection schemes differ along two axes: additive vs. multiplicative, and fixed variance vs. input-adaptive variance.
\cite{goldfeld2019estimating} estimated MI by adding Gaussian noise of fixed variance to each neuron output and performed analysis of information flow, as well as comparison to the geometric compression.
The deep variational IB method~\cite{alemi2017deep} and its variants~\cite{fischer2020conditional} add noise with mean and variance learned from the input, i.e., additive adaptive noise.
This setting has not previously been analyzed from the perspective of information compression, a gap our paper addresses.
\cite{adilovainformation} analyzed multiplicative noise from the information compression perspective.
They adopted the adaptive-variance framework of \cite{achille2018information} and additionally introduced multiplicative fixed-variance noise, that is essentially continuous dropout, as a novel setting.
For this fixed-variance case, they proved that continuous dropout guarantees finite MI between inputs and representations in DNNs with ReLU activations.
In this work, we extend that result to analytic activation functions, enabling information-theoretic analysis across modern architectures (GELU, etc.).
We do not pursue multiplicative adaptive-variance noise~\cite{achille2018information} because its key advantage over the fixed-variance approach, which is closed-form MI expressions that avoid numerical estimation, is specific to softplus activations.
For ReLU, GELU, or similar activations, no analogous closed-form expressions exist, so MI would require numerical estimation.

\section{Methodology}
\label{sec:methodology}

We consider a supervised classification setting, that is, DNNs trained on a dataset $\mathcal{S}$ sampled from an unknown distribution $\mathcal{D}$ on $\mathcal{X} \times \mathcal{Y}$.
Here, $\mathcal{X}\subseteq \mathbb{R}^n$ is the space of inputs and $\mathcal{Y}=\{1,\dots,c\}$ is the set of $c$ classes.
We denote by $X$ the (typically continuous) multivariate random variable that describes the input to the DNN, and by $Z$ some representation of $X$ that it produces.
The unknown distribution $\mathcal{D}$ and the DNN induce a distribution of $Z$ in the representation space $\mathcal{Z}\subseteq\mathbb{R}^d$.
Throughout this work, all quantities are evaluated on the converged model after training, not tracked as functions of training epochs.

We consider two types of models in this work: DNNs trained with the Conditional Entropy Bottleneck (CEB)~\cite{fischer2020conditional}, which implements data-dependent additive noise, and models regularized with Gaussian dropout, which implements fixed multiplicative noise.
We chose these models because they complement the existing literature:
Models with fixed additive noise were considered by~\cite{goldfeld2019estimating}, while data-dependent multiplicative noise was examined by~\cite{adilovainformation}.

CEB trained models are slight modification of the variational IB models~\cite{alemi2017deep}, with only difference in the distribution at the bottleneck layer.
The main difference between CEB and variational IB is that the variational marginal for $Z$ is parameterized by the class label.
Variational IB models have demonstrated effectiveness across applications including multi-view learning~\cite{Federici_Multiview}, multi-task learning~\cite{Qian_MTVIB}, and invariant representation learning~\cite{CLUB,moyer2018invariant}, illustrating the practical relevance of this modeling framework.
Models trained with the CEB objective consist of a deterministic decoder and a stochastic encoder that draws $Z$ from a Gaussian distribution with a mean vector and a covariance matrix that depend on the respective input $x \in \mathcal{X}$.
Equivalently, the encoder can be assumed to map the input $x$ deterministically via a learned function $f$ and then add zero-mean Gaussian noise $D(x)$ with a data-dependent covariance, i.e., $Z=f(x)+D(x), D(x) \sim \mathcal{N}(0, \sigma^2(x)I)$.
The decoder $g$ then maps a sampled $Z$ to class probabilities. 
The CEB training objective is given as:
\begin{equation}\label{eq:ceb}
    \mathcal{L}_{CEB} = I(X;Z|Y) + \beta L_{ce}(g(f(X)+D(X)),Y)\enspace,
\end{equation}
where $L_{ce}$ is the cross-entropy loss and $\beta$ trades between classification performance and compression of MI.
CEB-trained models thus explicitly minimize MI between input and latent representation $I(X;Z|Y)$ via increasing $\sigma^2$ - injecting more noise makes $Z$ less informative about $X$.
Conversely, the cross-entropy term aims to decrease $\sigma^2$ since too much noise in the latent representation makes classification inaccurate.
Therefore, for a finite $\beta$, optimization of the loss for every sample settles on some strictly positive $\sigma^2$, which guarantees finiteness of $I(X;Z|Y)$.

Multiplicative noise via dropout represents a widely used form of stochasticity, present in nearly all state-of-the-art architectures.
Although continuous Gaussian dropout has not been as widely adopted as its Bernoulli counterpart, its effect on training is similar and in some aspects more advantageous, as noted in prior work~\cite{srivastava2014dropout}.
However, to compare geometric compression with an information-theoretic one for dropout-regularized networks, we first need to establish that continuous multiplicative noise renders MI between inputs and representations finite and, thus, quantifiable.

\subsection{Guaranteeing Finite MI for Continuous Dropout Models}
In dropout neural networks, latent representations $Z$ equal $f(X)\circ D$, where $\circ$ denotes element-wise multiplication, and $D \sim \mathcal{N}(1, \sigma^2I)$.
For these models it was shown that $I(X;Z)$ is finite if $f$ is a DNN with ReLU activation functions, cf.~Theorem~3.3 and Proposition~3.5 of~\cite{adilovainformation}.
The following theorem extends this result to real analytic activation functions, which include common activation functions such as sigmoid or tanh.
With this result, we effectively show that any DNN with continuous dropout has a provably finite MI between inputs and representations.
This creates the possibility to employ the information-theoretic analysis framework for any state-of-the-art DNN without resorting to purely stochastic or quantized models.

\begin{theorem}[Mutual Information is Finite in Continuous Dropout Networks]\label{thm:finiteMI}
    Consider a non-zero deterministic DNN function $f{:}\ \mathbb{R}^n \rightarrow \mathbb{R}^d$ constructed with finitely many layers, a finite number of neurons per layer, and non-constant real analytic activation functions. Let $X$ be a continuously distributed RV with bounded probability density function $p(x)$ and compact support.\\
    Let $Z = f(X)\circ D(X)$, where $D(X) = (D_1(X),\ldots, D_d(X))$ is (potentially data-dependent) noise with components conditionally independent given $X$ such that all $D_i(X)$ have essentially bounded differential entropy and second moments, i.e., $\mathbb{E} \left [ D_i(X)^2 \right ] \leq M < \infty$ $X$-almost surely for some $M$. Then, $I(X;Z) < \infty$.
\end{theorem}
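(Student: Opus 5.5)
We write $I(X;Z)=h(Z)-h(Z\mid X)$ and show that $h(Z)<\infty$ and $h(Z\mid X)>-\infty$, with $Z$ having a density. This mirrors the ReLU argument of Theorem~3.3 and Proposition~3.5 in \cite{adilovainformation}. There, finiteness reduced to two facts. First, the coordinates of $f(x)$ used in $Z$ are nonzero for almost every $x$. Second, the multiplicative noise then produces a conditional density with controlled entropy. For ReLU the first fact came from piecewise linearity. Here it will come from real analyticity.

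\textbf{Step 1 (non-vanishing coordinates).} Each output coordinate $f_i$ is a composition of affine maps and non-constant real analytic activations. Hence $f_i$ is real analytic on $\mathbb{R}^n$. A real analytic function that is not identically zero has a zero set of Lebesgue measure zero. Let $J=\{i: f_i\not\equiv 0\}$, which is nonempty because $f$ is non-zero. Since $X$ has a density, $f_i(X)\neq 0$ almost surely for every $i\in J$. For $i\notin J$ we have $Z_i\equiv 0$; these coordinates are deterministic and carry no information, so we drop them and work with $Z_J$.

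One subtlety is that $f_i$ could be a nonzero constant; then the coordinate is nonzero trivially. The genuinely delicate case is $f_i$ identically zero on the support of $X$ while nonzero elsewhere. This cannot happen when the support has nonempty interior, by the identity theorem. If the support were a null set, $X$ would not have a density, which the hypotheses exclude. So we only need that the set of zeros inside the support is null, and this holds.

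\textbf{Step 2 (conditional entropy lower bound).} Condition on $X=x$ with $f_i(x)\ne0$ for all $i\in J$. The components $Z_i=f_i(x)D_i(x)$ are then conditionally independent, so
\[
h(Z_J\mid X=x)=\sum_{i\in J}\bigl(h(D_i(x))+\log|f_i(x)|\bigr).
\]
By assumption, $h(D_i(x))$ is essentially bounded, so it suffices to show $\mathbb{E}\bigl[\,\log|f_i(X)|\,\bigr]>-\infty$.

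This is the main obstacle: we need integrability of $\log|f_i|$ near the zero set of $f_i$. The plan is to use the Łojasiewicz inequality, or equivalently the local integrability of $|g|^{-\epsilon}$ for real analytic $g\not\equiv0$ and small $\epsilon>0$, on a compact set containing the support of $X$. This gives $\int_K |\log|f_i(x)||\,dx<\infty$. Since $p$ is bounded, $\mathbb{E}\bigl[\,\log|f_i(X)|\,\bigr]$ is finite. A more hands-on alternative uses the Weierstrass preparation theorem locally, covering the compact set by finitely many neighbourhoods. In each one $f_i$ is a unit times a Weierstrass polynomial in one variable, whose log is integrable along lines by Fubini.

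\textbf{Step 3 (upper bound on $h(Z)$).} Since $f$ is continuous and the support of $X$ is compact, $|f_i(X)|\le B$. Therefore $\mathbb{E}[Z_i^2]\le B^2M$. Conditional on $X$, $Z_J$ has a density, so the mixture $Z_J$ has a density. The maximum entropy bound under a second-moment constraint then gives
\[
h(Z_J)\le \tfrac{|J|}{2}\log\bigl(2\pi e B^2 M\bigr)<\infty.
\]

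\textbf{Conclusion.} Combining Steps~2 and~3 gives $I(X;Z)=h(Z_J)-h(Z_J\mid X)<\infty$.
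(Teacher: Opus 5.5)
Your argument is correct and has the same skeleton as the paper's. The paper also reduces the theorem to finiteness of $\mathbb{E}[\log|f(X)|]$ (its appendix Proposition) and then invokes Theorem~3.3 of \cite{adilovainformation}, which is exactly the $h(Z)-h(Z\mid X)$ decomposition with the second-moment maximum-entropy bound that you write out in Steps~2 and~3.

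The genuine difference is in how that key lemma is proved. The paper combines the Lojasiewicz structure theorem and the inequality $|f(x)|\ge C_L\,\mathrm{dist}(x,\mathcal{Z}_f)^q$ with a Whitney stratification of $\mathcal{Z}_f$. It then integrates in normal coordinates around each stratum to arrive at $\int_0^\epsilon \log(r)\,r^{n-d_j-1}\,\mathrm{d}r$. Your routes bypass the geometry of the zero set:
\begin{itemize}
\item Local integrability of $|f_i|^{-\epsilon}$, combined with $\log t\ge -t^{-\epsilon}/\epsilon$, is a one-liner, but it rests on a nontrivial theorem you would have to cite.
\item Weierstrass preparation, after a generic linear change of coordinates making $f_i$ regular in $x_n$, plus Fubini reduces everything to the uniform bound $\int_{-a}^{a}\log|t-c|\,\mathrm{d}t\ge 2a\log a-2a$ for real $c$. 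You apply it after replacing each complex root by its real part, so the route is elementary.
\end{itemize}
The second route also avoids steps the paper treats informally: the normal-coordinate change of variables near lower-dimensional strata, where tubular neighbourhoods shrink, and the finiteness of the strata volumes.

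One correction: the Lojasiewicz inequality is not ``equivalent'' to these routes. On its own it only reduces the problem to integrability of $\log\mathrm{dist}(x,\mathcal{Z}_{f_i})$. That integrability fails for some closed null sets, so it needs exactly the structural input that the paper's stratification supplies.

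Your explicit handling of identically-zero and nonzero-constant output coordinates is more careful than the paper's scalar, non-constant Proposition. The identity-theorem aside in Step~1 is superfluous, though: $f_i\not\equiv 0$ on $\mathbb{R}^n$ already makes its zero set null in all of $\mathbb{R}^n$.
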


\paragraph{Proof sketch} In order to prove this theorem for analytical activation functions one needs to provide a proof that $\mathbb{E}\left [\log |f(X)|\right] = \int_K \log |f(x)|\, p(x)\, \mathrm{d}x$ is finite~\cite{adilovainformation}.
Since $p(x)$ is bounded and has compact support, the integral can only diverge to $-\infty$ near zeros of $f$.  
Because $f$ is real analytic, its set of zeros $\mathcal{Z}_f$ in the support of $p(x)$ has a structured form: by the Lojasiewicz Structure Theorem, $\mathcal{Z}_f$ is a finite union of lower-dimensional varieties (of dimension at most $n{-}1$). 
Hence $\mathcal{Z}_f$ has measure zero and the integral is well-defined.  
Near the zeros, the Lojasiewicz inequality for analytic functions ensures that $|f(x)|$ is bounded below by a polynomial in the distance to $\mathcal{Z}_f$: $|f(x)| \;\geq\; C \cdot \mathrm{dist}(x,\mathcal{Z}_f)^q$.
Thus, $\log|f(x)|$ is controlled by $\log(\mathrm{dist}(x,\mathcal{Z}_f))$ near $\mathcal{Z}_f$.  
Finally, by stratifying $\mathcal{Z}_f$ into smooth manifolds $M_j$ of various dimensions and integrating in polar coordinates around each $M_j$, the local integrals reduce to terms of the form $\int_0^\epsilon \log(r)\, r^m\, \mathrm{d}r$, which are finite for any $m \ge 0$. 
Summing over all components gives the result. The full proof is provided in Appendix~\ref{app:proof}.

\paragraph{Remark} Any latent representation that follows the dropout layer also has a finite MI with input due to the data processing inequality.

This theorem serves a dual purpose: it underlies the experiments presented in this paper, and independently, it provides a theoretically sound foundation for future information-theoretic analysis of dropout-regularized networks within a frequentist framework.

\subsection{Estimating Mutual Information}
\label{sec:methodology:MI}

With finite MI between $X$ and $Z$ guaranteed, the challenge is to estimate it from a dataset $\mathcal{S} \sim \mathcal{D}$.

\paragraph{CEB}
For CEB-trained models, the variational bound on $I(X;Z|Y)$ is directly embedded in the training objective, so the MI estimate is available ``for free'' during training.
By definition, the variational formulation of~\eqref{eq:ceb} is
\begin{equation}\label{eq:ceb:variational}
     \min_{q_{Z|Y}} \mathbb{E}\left[KL(e_{Z|X}(\cdot|X)\Vert q_{Z|Y}(\cdot|Y))\right] + \beta L_{ce}(g(f(X)+D(X)),Y) \enspace,
\end{equation}
where the encoder $e_{Z|X}=\mathcal{N}(f(x),\sigma(x))$ is defined by the DNN and where the expectation is taken w.r.t.\ the data distribution $\mathcal{D}$.
In this equation
\begin{multline}
    \mathbb{E}\left[KL(e_{Z|X}(\cdot|X)\Vert q_{Z|Y}(\cdot|Y))\right] = \\
    I(X;Z|Y) + \mathbb{E}\left[KL(p_{Z|Y}(\cdot|Y)\Vert q_{Z|Y}(\cdot|Y))\right]
\end{multline} (cf.~\cite[eq.~(12a)]{Geiger_VIBvsVCEB}).
However, this estimate of MI is only useful if the gap $\mathbb{E}[KL(p_{Z|Y}\,\|\,q_{Z|Y})]$ remains small.
In the CEB setting, $q_{Z|Y}$ is parameterized by a learned backward encoder, which is a separate network that takes the class label $y$ as input and outputs a distribution over $Z$.
This backward encoder is optimized jointly with the forward encoder $e_{Z|X}$, actively fitting $q_{Z|Y}$ to track the true class-conditional marginal $p_{Z|Y}$.
This co-training keeps the gap $\mathbb{E}[KL(p_{Z|Y} \Vert q_{Z|Y})]$ small, ensuring that the variational expression provides a practically tight surrogate for $I(X;Z|Y)$ rather than a loose upper bound.

\paragraph{Dropout} Models trained with Gaussian dropout require an external MI estimator.
Among the published MI estimators with established theoretical guarantees, we selected the Difference of Entropies (DoE) estimator~\cite{mcallester2020formal} as the most suitable for high-dimensional continuously distributed representations; we provide a detailed comparison with MINE~\cite{belghazi2018mine} and CLUB~\cite{cheng2020club} in Appendix~\ref{sssec:mi_compare}.
To estimate MI between input $X$ and representation $Z$, the DoE estimator computes $I(X;Z) = H(Z) - H(Z|X)$ by separately estimating the unconditional entropy $H(Z)$ and the conditional entropy $H(Z|X)$.
In practice, $H(Z)$ is approximated using a parametric logistic distribution with learnable parameters $q_Z$.
The conditional entropy $H(Z|X)$ is approximated using a neural network $q_{Z|X}$ that predicts a distribution over $Z$ conditioned on $X$.
Both terms are estimated via the mean negative log-likelihood (cross-entropy) of observed samples:
\begin{align*}
    H(Z) &\approx -\frac{1}{N}\sum_{i=1}^N \log q_Z(y_i) \enspace, \\
    H(Z|X) &\approx -\frac{1}{N}\sum_{i=1}^N \log q_{Z|X}(z_i|x_i) \enspace.
\end{align*}
During training, we minimize the negative log-likelihoods of both approximations.
Iteratively, this process makes the learned distributions $q_Z$ and $q_{Z|X}$ as close as possible to the true marginal $p_Z$ and conditional $p_{Z|X}$, respectively.
The MI is then obtained as the difference between the two estimates.
A key advantage of DoE is that it provides a stable, finite-sample estimate of MI, particularly when the true MI is large, which often leads to saturated estimates with traditional variational lower bounds~\cite{mcallester2020formal}.

CEB and dropout setups necessarily differ in their approach to MI estimation.
In the CEB setting, a variational upper bound on $I(X;Z|Y)$ is embedded directly in the training objective and in the dropout setting no such training-time bound is available.
The fact that the key empirical finding relating geometric and information compression, i.e., non-linear negative correlation, presented in Section~\ref{sec:experiments} is consistent across both estimations with different bias characteristics strengthens confidence in the result.
A further asymmetry concerns conditioning on the class label $Y$.
In the CEB setup, only $I(X;Z|Y)$ is available, while in the dropout setup estimating $I(X;Z|Y)$ reliably is infeasible.
The number of samples per class is insufficient for any external estimator to converge at the per-class level (see Figure~\ref{fig:doe}(c)).
We therefore estimate the full $I(X;Z)$ instead.
This is justified by the Markov chain $Y \rightarrow X \rightarrow Z$, which gives $I(X;Z|Y) = I(X;Z) - I(Z;Y)$.
Since $I(Z;Y) \leq \log(c)$ where $c$ is the number of classes, and our measured $I(X;Z)$ is several orders of magnitude larger than $\log(c)$, the two quantities induce the same relative ordering across setups - which is the only information required for rank correlation analysis.
When rank correlations are computed with $I(X;Z) - \log(c)$ the results on the training data do not change and maximum deviation on the test data is $0.02$.

\subsection{Measuring Geometric Compression}
\label{sec:methodology:NC}
We characterize geometric compression by quantifying the class-wise clustering structure of representations.
Specifically, we adopt the neural collapse (NC1) statistic~\cite{galanti2021role}, relating within-class variance to between-class centroid distances:
\begin{equation}\label{eq:NC}
    \mathrm{NC} = \frac{1}{c(c-1)} \sum_{i=1}^c \sum_{j=i+1}^c \frac{\mathrm{Var}_i+\mathrm{Var}_j}{\Vert \mu_i-\mu_j\Vert^2} \enspace
\end{equation}
where $\mu_i=\sum_\ell z_\ell/K_i$ and $\mathrm{Var}_i=\sum_\ell \Vert\mu_i-z_\ell\Vert^2/K_i$ are the mean and variance of $K_i$ latent representations sampled from inputs $X$ belonging to class $i$, and where $\Vert\cdot\Vert^2$ is the squared Euclidean norm.
NC is small when clusters are tight and well-separated, indicating strong geometric compression, and large when clusters are diffuse or poorly separated.
This statistic originates from the neural collapse literature but is used here as a measure of clustering quality, as it is more reliable in high-dimensional spaces than alternatives such as the Silhouette score or binned entropy~\cite{galanti2021role,ting2025flatness}.


\subsection{Inducing Variation in Compression}
\label{sec:methodology:regul}

To investigate the relationship between mutual information and class-wise clustering strength, we train models under varying regularization settings.
These settings are chosen to induce different training outcomes while still maintaining reasonable performance.

For CEB-trained models, we vary the $\beta$ coefficient in~\eqref{eq:ceb} to control the tradeoff between MI compression and classification performance.
Small $\beta$ down-weights the cross-entropy term, causing the optimizer to prioritize MI compression at the expense of classification accuracy.
Large $\beta$ does the opposite, focusing optimization on classification while allowing more information to flow through the bottleneck.
Sweeping $\beta$ thus produces a controlled set of models spanning a range of MI compression levels and generalization abilities, providing a testbed for examining how geometric compression behaves as information-theoretic compression varies.

A natural way to obtain models with varying properties under continuous dropout would be to vary the dropout variance.
However, this is not viable in practice since stable training requires a narrow range of dropout values.
Furthermore, varying dropout variance primarily controls the amount of injected noise and thus information flow, which would replicate the CEB setup rather than provide an independent testbed.
Instead, we introduce a geometric regularizer as a term $\lambda( \tanh(\alpha \mathrm{NC}))$, where $\lambda$ controls the strength of regularization, $\alpha$ scales $\mathrm{NC}$ in the effective range of $\tanh(\cdot)$, and $\tanh(\cdot)$ ensures boundedness and saturation of the contribution of the regularization term.
$\alpha$ in the experiments is selected through hyperparameter search.
Changing $\lambda$, we obtain models with different degrees of cluster tightness while keeping the dropout variance and the MI estimation setup fixed.
Since clustering tightness is linked to classification performance, this produces models with a range of generalization abilities as a complementary testbed to the CEB experiments.

\section{Experiments}
\label{sec:experiments}
In this section, we present the experimental setup and evaluate geometric and information compression in two settings: additive adaptive noise in CEB-trained models and multiplicative constant noise in Gaussian dropout models.
For dedicated latent variables $Z$, we quantify clustering using $\mathrm{NC}$ introduced in \eqref{eq:NC} and estimate the MI between inputs and $Z$ using either the variational bound implicit in the CEB loss or the DoE estimator discussed in Section~\ref{sec:methodology:MI}.
We then analyze the correlation between these measures and performance of the models.

Since we expect nonlinear relationships between these quantities, we use the rank correlation instead of the linear (Pearson) correlation.
Moreover, different datasets and architectures lead to different ranges of the considered quantities, and normalization is required.
We first convert the quantities in each experimental setup into ranks, normalize these ranks, and then evaluate the linear correlation of the collection of ranks over all experiments.
For details on the ranking computation see Appendix~\ref{sec:correlation_computation}.
To reduce variance in our correlation estimates, we pooled results across random seeds rather than computing separate per-seed confidence intervals.
To verify the robustness of this choice, we additionally computed rank correlations per seed; these results, reported in the Appendix\ref{sec:per_seed_confidence}, confirm the stability of the estimates.

\subsection{CEB-trained Networks}
We devised four setups of the experiments with CEB.
\begin{wrapfigure}{r}{0.35\textwidth}
     \centering
      \includegraphics[width=1.0\linewidth]{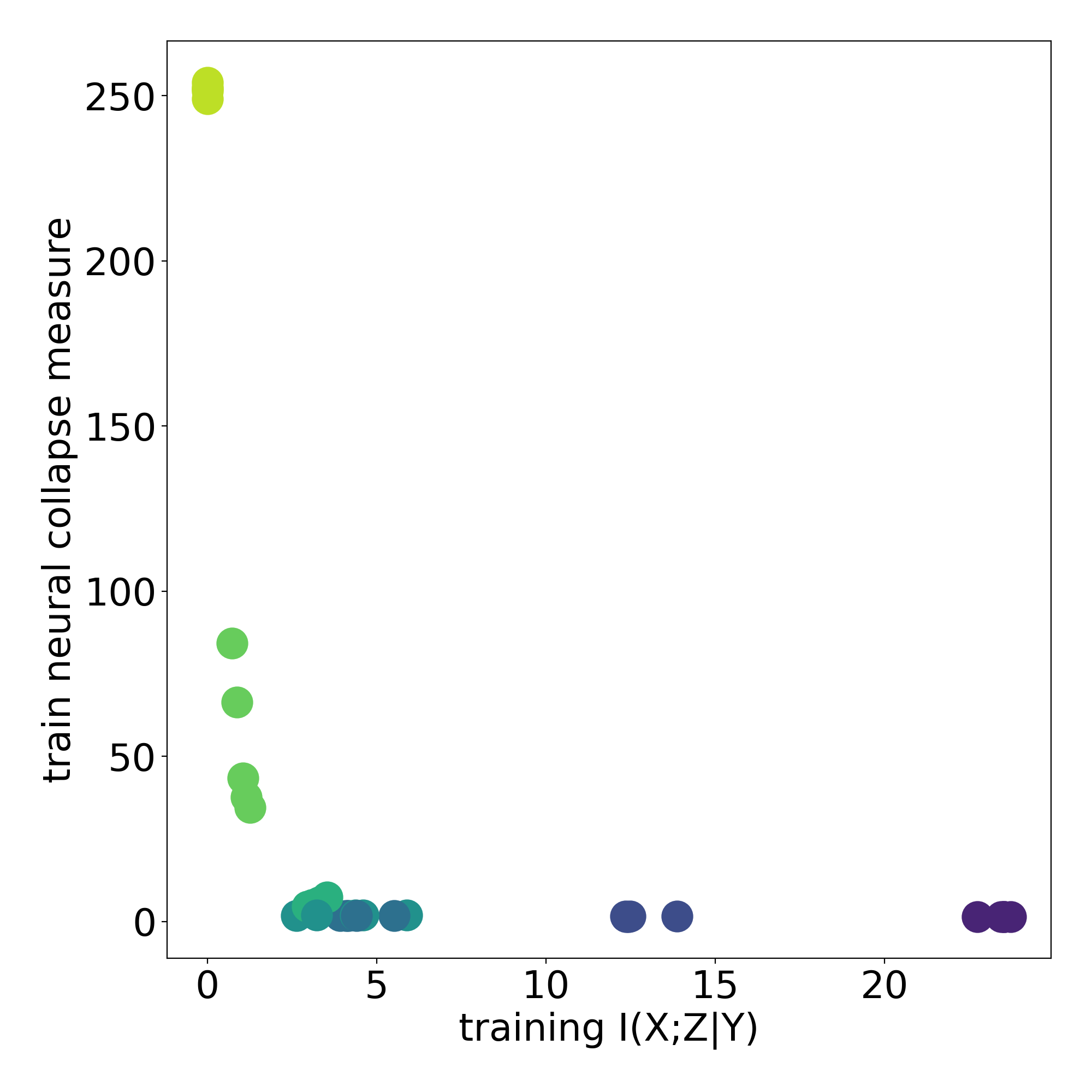}
         \caption{$\mathrm{NC}$ against MI on the train data for CEB (DenseNet121 on CIFAR-100). Strong clustering can correspond to large and small MI, and strong compression of MI can correspond both to weak and strong clustering.}
        \label{fig:mi_nc_ceb}
\end{wrapfigure}
In each setup, we swept $\beta$ over ${1000, 500, 100, 50, 10, 5, 2}$ and trained five models per value with different random seeds.
We observed that very small $\beta$ leads to underfitted models, while getting $\beta$ too large can lead to overfitting.
The four considered setups are as following:
\begin{enumerate}
    \item An MLP (5 hidden layers with 1024 neurons each, ReLU activation functions) as an encoder for FashionMNIST. Dimensionality of $Z$ is selected to be $256$. The decoder is an MLP with one hidden layer of size $1024$ and ReLU.
    \item LeNet5~\cite{lecun1998gradient} encoder for FashionMNIST. Dimensionality of $Z$ is selected to be $64$. The decoder is an MLP with one hidden layer of size $1024$ and ReLU.
    \item WideResNet-28-4~\cite{zagoruyko2016wide} encoder for CIFAR-10. Dimensionality of $Z$ is selected to be $256$. Decoder is an MLP with one hidden layer of size $1024$ and ReLU.
    \item Densenet121~\cite{huang2017densely} encoder for CIFAR-100. Dimensionality of $Z$ is selected to be $256$. Decoder is an MLP without hidden layers.
\end{enumerate}
The backward encoder, i.e., the variational distribution $q_{Z|Y}$, is modeled as a family of Gaussians parameterized by means and variances depending on the class label $y$, therefore in all the setups it is a shallow MLP without any hidden layers.
In CEB-trained models, the latent representation is stochastic by design, which requires sampling in order to evaluate both the training loss and downstream measures: In each setup we drew $8$ samples per data point from the representation distribution during training.
The same sampling procedure was applied at evaluation time, and MI was extracted from the CEB loss.
For details on the training setup, see Appendix~\ref{sec:hyperparameters}.

As an exemplary setup we demonstrate the non-linear correlation between MI and NC in the experiments with DenseNet121 in Fig.~\ref{fig:mi_nc_ceb}.
Large MI can be consistent with strong clustering of the representations, and vice versa.
We refer to an analysis of this phenomenon in Fig.~\ref{fig:theoretical}, where we demonstrate that information-theoretic compression can be achieved by both strong clustering with small $\mathrm{NC}$ and by uninformative, noisy representations with large $\mathrm{NC}$.
At the same time, strong clustering can still lead to large MI values if the encoder noise is sufficiently small.
Thus, the interplay between information-theoretic and geometric compression in a sense of clustering is quite intricate.

In Table~\ref{tab:ceb} we present the correlations for the entire experiment.
In all four of the setups we see a clear monotonic and non-linear correlation between the NC measure and conditional MI $I(X;Z|Y)$, both for test and train data.
In the first group of rank correlations in Table~\ref{tab:ceb}, we demonstrate that larger $\beta$ leads to higher classification accuracy and larger MI.
This is expected from the CEB training objective, as $\beta$ shifts optimization efforts between compression of MI and performance (see eq.~\eqref{eq:ceb}).
At the same time, $\beta$ is negatively correlated with the NC measure, which means that larger $\beta$ leads to clustered representations.
This goes against the common intuition that tighter clusters co-occur with more compressed MI, but is aligned with the conviction that more clustered representations indicate higher accuracy.

\subsection{Gaussian Dropout Neural Networks}
We devised four setups for the experiments with multiplicative noise.
For the experiments we fix the dropout variance found through hyperparameter search that allows to achieve close to the state-of-the-art training results.
We apply the NC regularization described in Section~\ref{sec:methodology:regul} and we vary the regularization strength $\lambda$ to a subset of values in the grid from $-50$ to $50$.
We also include the setup without any regularization ($\lambda=0$).
For each value of $\lambda$, $5$ DNNs were trained with different random seeds.
We employ standard neural architectures, but in each integrate at least one Gaussian dropout layer and add a fully connected layer of dimensionality $128$ or $512$ as the penultimate layer corresponding to the hidden variable $Z$.\footnote{We introduce a layer of a dimensionality less than the usual final fully-connected layers in such architectures in order to speed up MI estimation and $\mathrm{NC}$ computation, but overall the experimental setup is possible without it as well.}
In order to estimate MI between inputs and representation we employ the logistic DoE estimator from \cite{mcallester2020formal} as discussed in Section~\ref{sec:methodology:MI}.
We sample representations from the trained model without switching off the dropout for estimating MI.
We obtain $4$ different representations for each of the inputs and use this data to train the estimator.
For the information on the estimator architecture and its training we refer the reader to the Appendix~\ref{sec:doe_hyperparam}.

The different setups of the experiment are as follows:
\begin{enumerate}
    \item ResNet-18~\cite{he2016deep} trained on CIFAR-10. For this model we added the dropout layer with variance $0.4$ before the fully-connected layer at the end of the architecture and representation layer of dimensionality $128$.
    \item VGG11~\cite{simonyan2014very} with batch normalization trained on SVHN. For VGG11 we added a dropout layer with variance of $0.5$ in the classifier module after the first layer and representation layer of dimensionality $512$.
    \item Densenet121~\cite{huang2017densely} trained on CIFAR-100. In this architecture we replaced the original binary dropout with continuous Gaussian dropout with variance of $0.3$ and representation layer of dimensionality $128$.
    \item MiniBERT~\cite{bhargava2021generalization,DBLP:journals/corr/abs-1908-08962} trained on the AG’s News Corpus dataset~\cite{zhang2015character}. In this architecture we applied Gaussian dropout to the initial embedding layer with variance of $0.6$ and representation layer of dimensionality $512$.
\end{enumerate}
See Appendix~\ref{sec:hyperparameters} for additional details on the training hyper parameters.

We provide the correlations in the setup in Table~\ref{tab:gaus}.
We again observe negative correlation between the MI and the neural collapse measure, showing that tighter clustering and compressed information do not coincide.
The character of correlation is the same as in the CEB experiments.
Differently from CEB experiments, $\lambda$ does not give a direct control over classification accuracy, so the correlation of the measures, both $\mathrm{NC}$ and MI, with accuracy is very weak.
\definecolor{highlightrow}{gray}{0.9}

\begin{table}[!ht]
\centering
\begin{subtable}{0.5\textwidth}
    \centering
    \begin{tabular}{l S[table-format=-1.2] S[table-format=-1.2]}
        \toprule
        & {Train} & {Test} \\
        \midrule
        acc $\|$ $\beta$    &  0.90 &  0.88 \\
        gen $\|$ $\beta$    &  0.21 &  {---} \\
        MI  $\|$ $\beta$    &  0.76 &  0.84 \\
        NC  $\|$ $\beta$    & -0.99 & -0.98 \\
        \cmidrule(lr){1-3}
        \rowcolor{highlightrow}
        \textbf{MI $\|$ NC} & \textbf{-0.75} & \textbf{-0.83} \\
        \cmidrule(lr){1-3}
        acc $\|$ MI         &  0.59 &  0.74 \\
        acc $\|$ NC         & -0.89 & -0.88 \\
        gen $\|$ MI         &  0.33 &  {---} \\
        gen $\|$ NC         & -0.21 &  {---} \\
        \bottomrule
    \end{tabular}
    \caption{CEB ($\beta$ sweep).}
    \label{tab:ceb}
\end{subtable}%
\hfill
\begin{subtable}{0.5\textwidth}
    \centering
    \begin{tabular}{l S[table-format=-1.2] S[table-format=-1.2]}
        \toprule
        & {Train} & {Test} \\
        \midrule
        acc $\|$ $\lambda$  & -0.02 & -0.30 \\
        gen $\|$ $\lambda$  &  0.09 &  {---} \\
        MI  $\|$ $\lambda$  & -0.48 & -0.25 \\
        NC  $\|$ $\lambda$  &  0.96 &  0.94 \\
        \cmidrule(lr){1-3}
        \rowcolor{highlightrow}
        \textbf{MI $\|$ NC} & \textbf{-0.46} & \textbf{-0.24} \\
        \cmidrule(lr){1-3}
        acc $\|$ MI         & -0.17 &  0.07 \\
        acc $\|$ NC         & -0.05 & -0.34 \\
        gen $\|$ MI         &  0.09 &  {---} \\
        gen $\|$ NC         &  0.07 &  {---} \\
        \bottomrule
    \end{tabular}
    \caption{Gaussian dropout ($\lambda$ sweep).}
    \label{tab:gaus}
\end{subtable}
\caption{Rank correlations between MI, NC, accuracy (\emph{acc}), and generalization gap (\emph{gen}) across all experiments, computed over ranked and normalized values pooled across all network-dataset setups. The upper group shows correlations with the control hyperparameter, the lower group shows correlations with model performance, and the highlighted row shows the central MI-NC relationship. Correlations are reported separately for train and test data to show that the MI--NC relationship holds on unseen data. The generalization gap (\emph{gen}) is a scalar quantity defined as the difference between train and test accuracy, and is therefore only reported in the train column.}
\label{tab:correlations}
\end{table}

\subsection{Observations}
The rank correlations in Tables~\ref{tab:ceb}, \ref{tab:gaus} in all experiments consistently indicate a negative relationship between class-wise clustering and MI compression.
However, in the Gaussian dropout setup, this trend is weaker, as some training configurations showed little correlation or even a positive one.
We demonstrate examples of different setups in Fig.~\ref{fig:observations}.
Here we compare three sets (each point is different random seed and different $\lambda)$: CIFAR-100 and two VGG11 configurations obtained with different values of the training hyperparameter $\alpha$ (see  Sec.~\ref{sec:methodology:regul}).
These settings produce models with distinct generalization behavior, as reflected in their train/test accuracy (top rows).
The MI–NC correlation changes sign across the three regimes ($-0.54$, $0.27$, $0.37$), indicating that the MI–NC relationship is not invariant across setups, and the two quantities may instead be confounded, consistent with Reichenbach's principle of common cause~\cite{reichenbach1991direction}.
In particular, the setup with the most stable generalization behavior (second column) is also where the MI–NC correlation is closest to vanishing, suggesting that generalization itself may act as a confounder in the MI–NC relationship.
While a closer investigation is left for future work, these results provide further evidence for the complex interplay between geometry and information in the representations.

The contrast in NC–accuracy correlation between Tables~\ref{tab:ceb} and \ref{tab:gaus} also reflects a structural difference between the two regularization mechanisms.
In CEB, the $\beta$ sweep is monotonic, moving models from underfitting toward fitting along a single direction, and underfitting reliably produces a spike in NC, reflecting scattered, weakly clustered representations.
The Dropout sweep is structurally different: the regularization coefficient ranges from negative to positive, enforcing and discouraging clustering within the same setup.
Consequently, neither NC nor the regularization strength itself correlates cleanly with accuracy or generalization in this regime.
This is consistent with prior findings~\cite{ting2025flatness} that NC is not necessary for generalization, and is visible in Fig.~\ref{fig:observations}, where higher accuracies correspond to less clustering.
We also note that the rank correlations in Tables~\ref{tab:ceb}/\ref{tab:gaus} can mask an important scale difference: in CEB, NC ranges up to $150$ while MI is tightly bounded; in Dropout, NC is bounded by $6$ while MI can be three orders of magnitude larger — a difference in dynamic range that likely sharpens the contrast between the two settings.
\begin{figure}[!ht]
  \centering
  
  \begin{subfigure}{0.3\textwidth}
    \includegraphics[width=\linewidth]{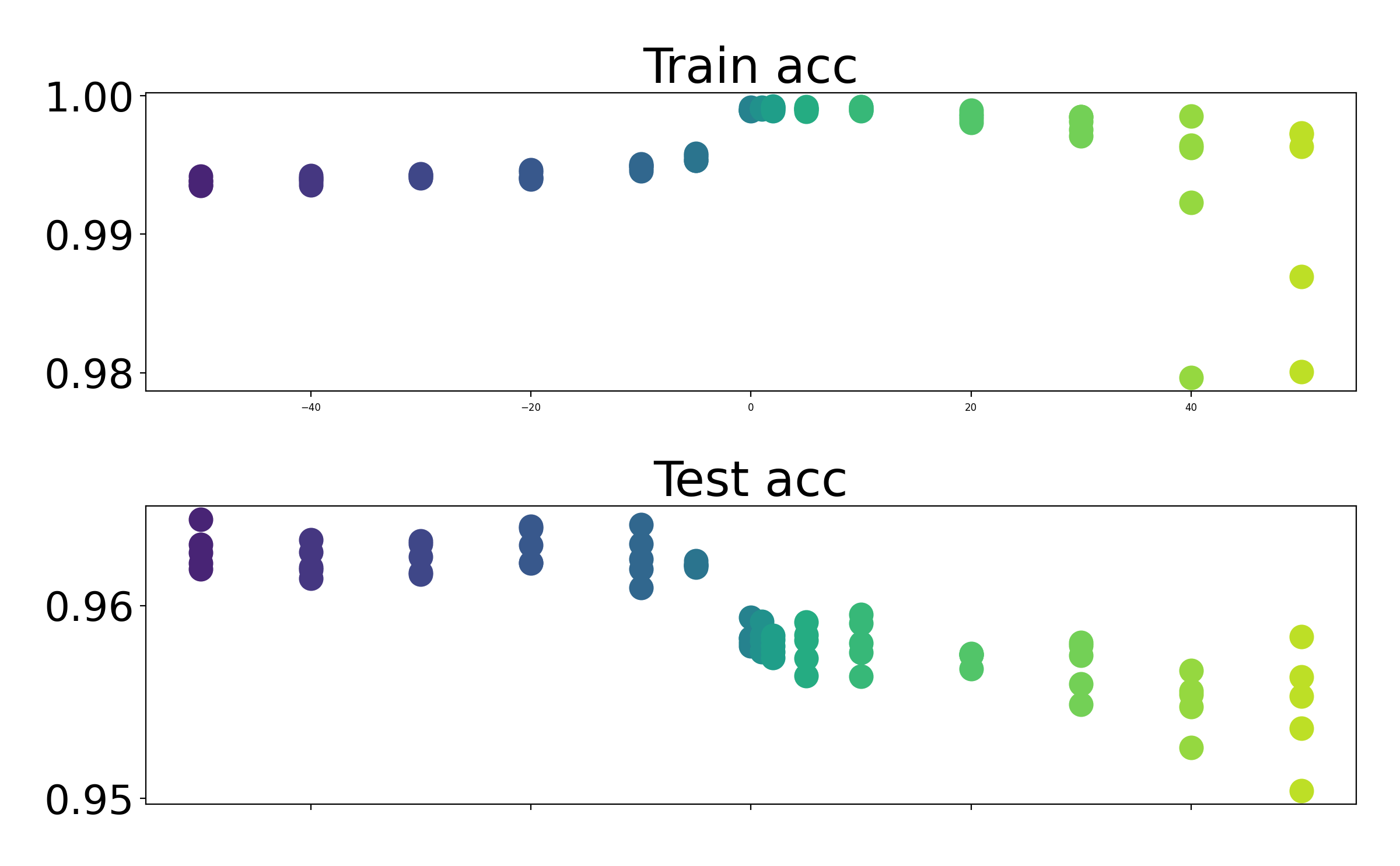}
  \end{subfigure}
  \hfill
  \begin{subfigure}{0.3\textwidth}
    \includegraphics[width=\linewidth]{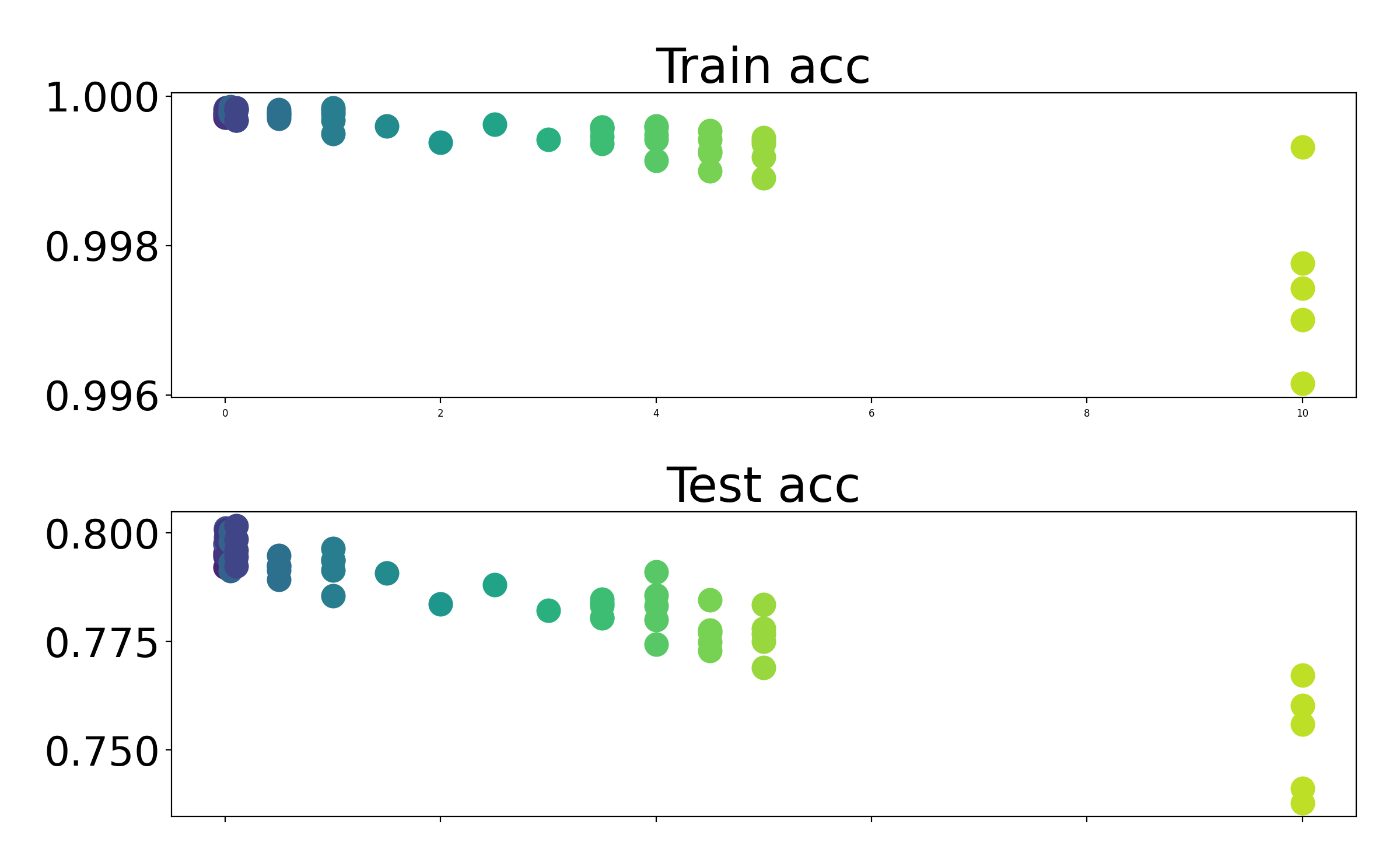}
  \end{subfigure}
  \hfill
  \begin{subfigure}{0.3\textwidth}
    \includegraphics[width=\linewidth]{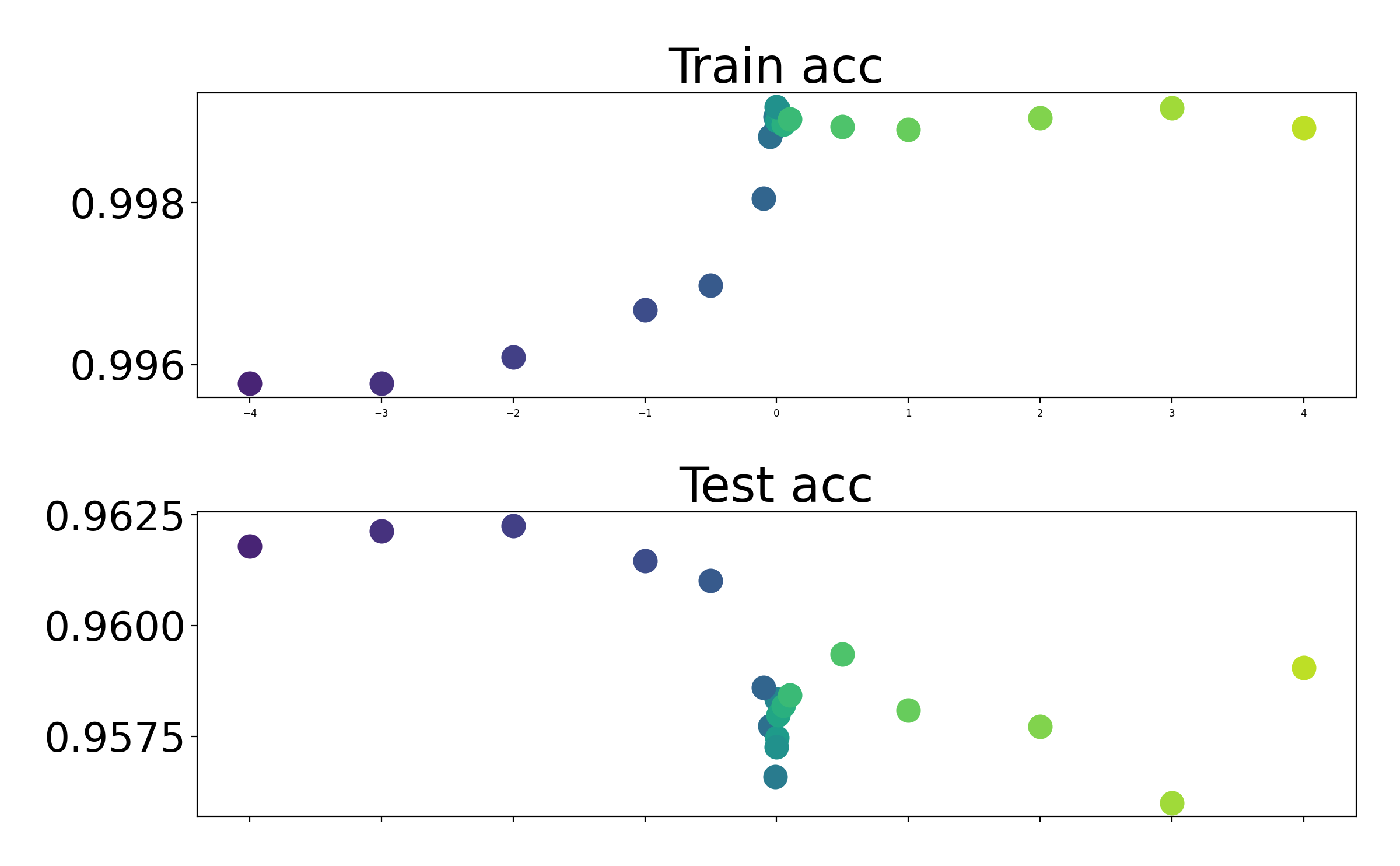}
  \end{subfigure}
  
  \vspace{0.5cm} 
  \begin{subfigure}{0.3\textwidth}
    \includegraphics[width=\linewidth]{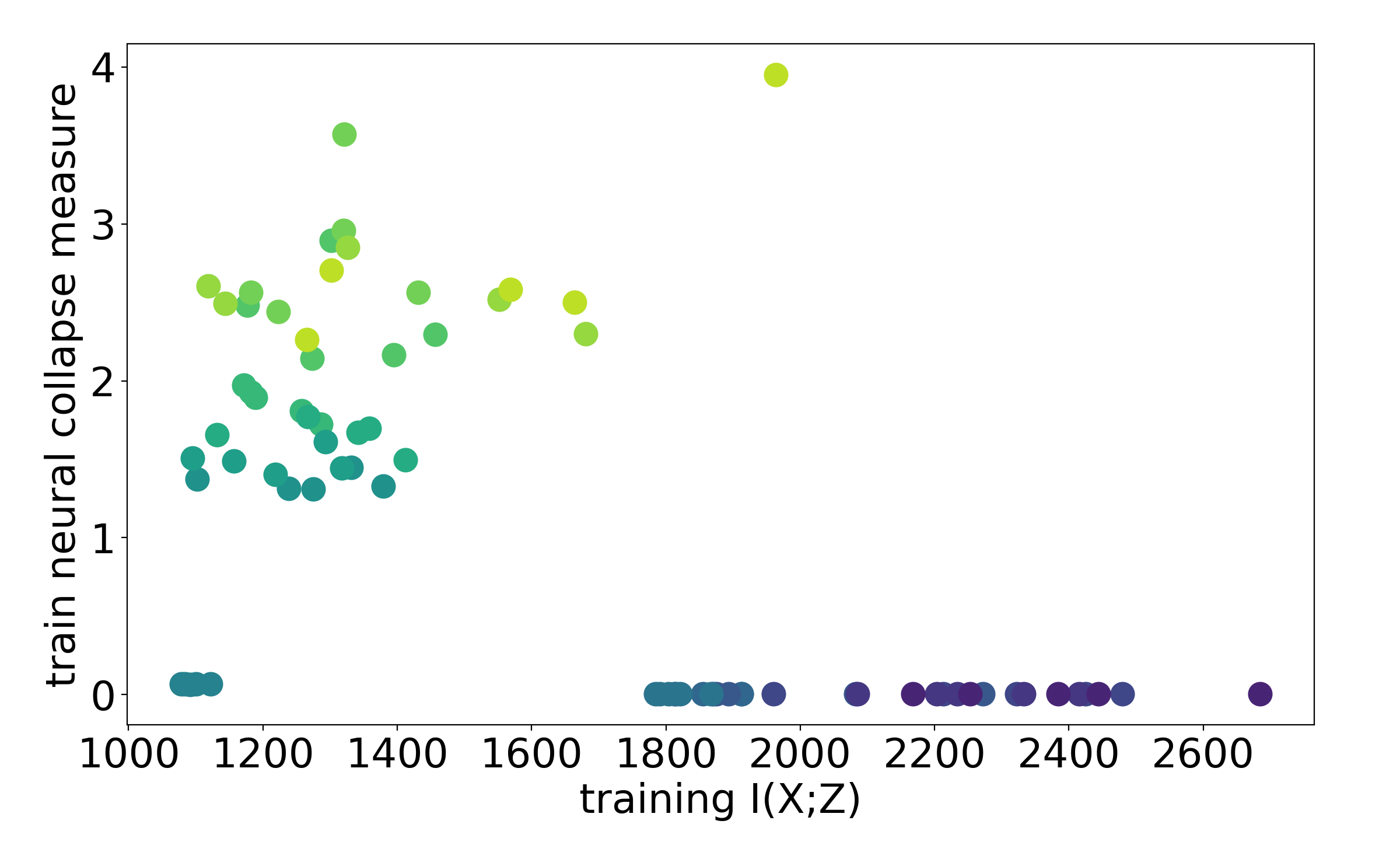}
    \caption{VGG11 on SVHN, $\alpha = 0.5$ (see  Sec.~\ref{sec:methodology:regul})}
  \end{subfigure}
  \hfill
  \begin{subfigure}{0.3\textwidth}
    \includegraphics[width=\linewidth]{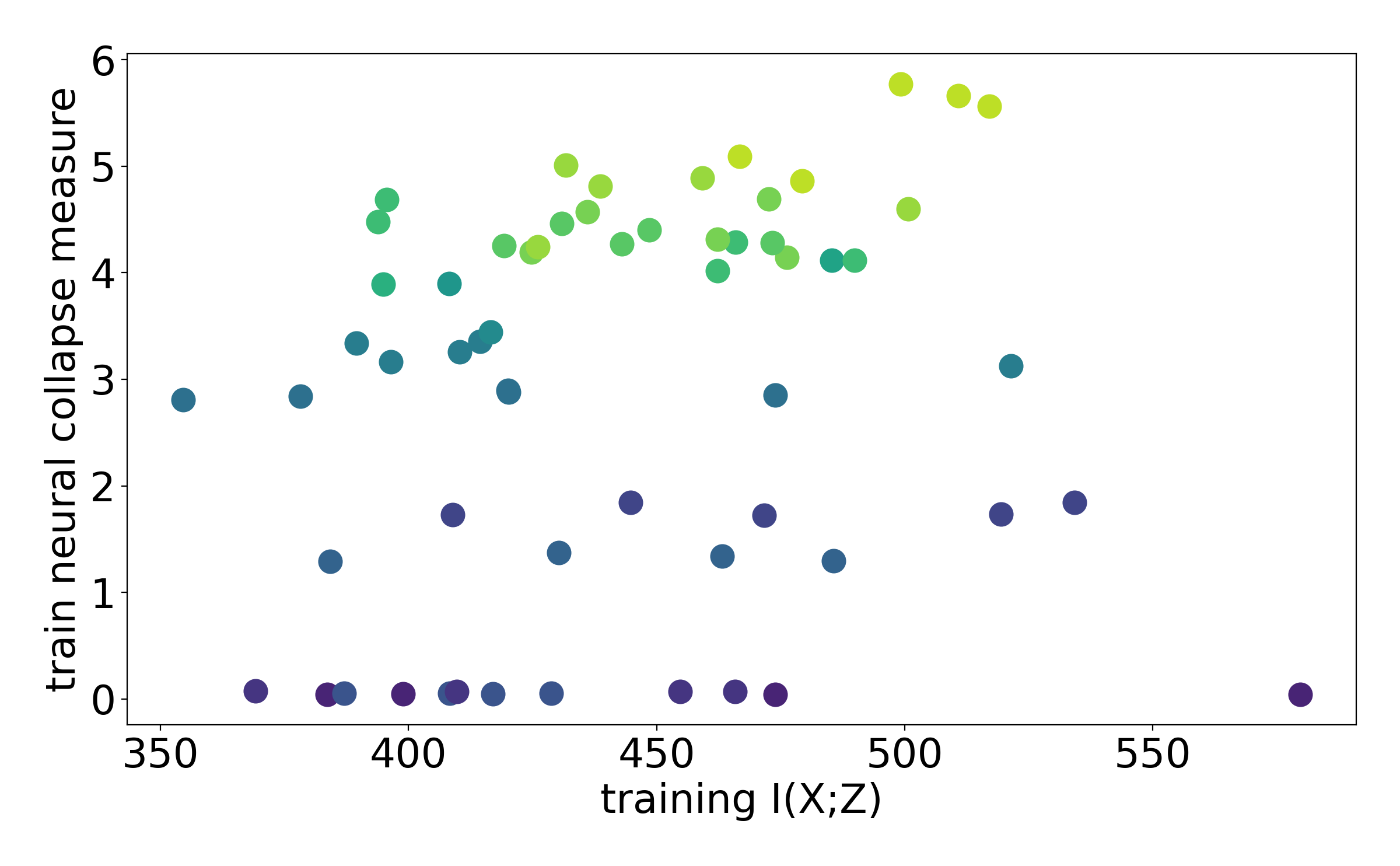}
    \caption{DenseNet121 on CIFAR-100}
  \end{subfigure}
  \hfill
  \begin{subfigure}{0.3\textwidth}
    \includegraphics[width=\linewidth]{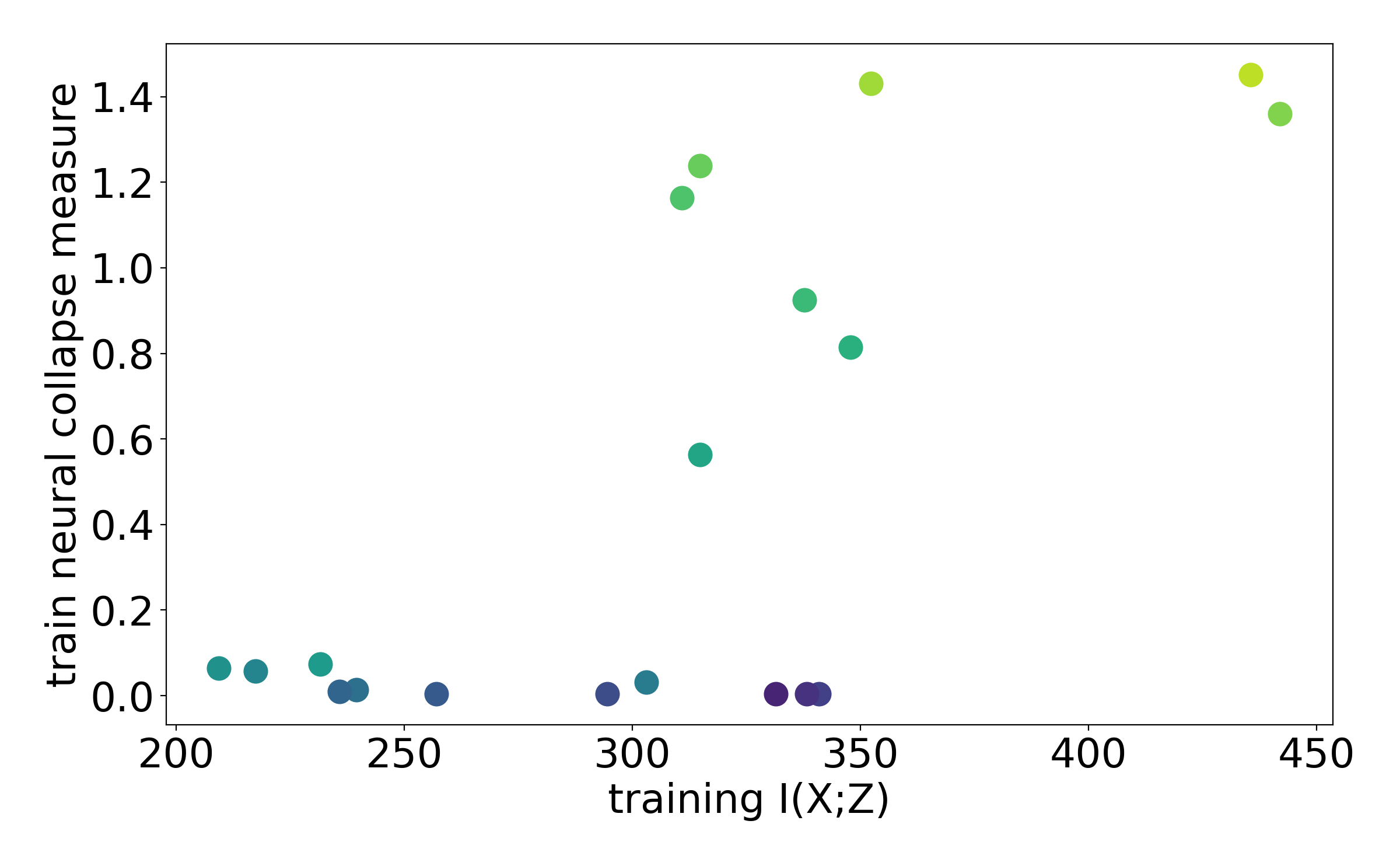}
    \caption{VGG11 on SVHN, $\alpha = 0.01$ (see  Sec.~\ref{sec:methodology:regul})}
  \end{subfigure}

  \caption{Sign of MC-NI correlation is not invariant under the change of the training outcome (top two rows): underfitting (a), normal (b), overfitting (c). The sign of the MI-NC correlation (bottom row) reverses across regimes ($-0.54, 0.27, 0.37$), indicating that generalization might act a confounder.}
  \label{fig:observations}
\end{figure}
%

\section{Discussion and Conclusion}
\label{discussion}

With this paper we contribute to the debate about the connection between geometric compression in terms of clustering of representations and information-theoretic compression in terms of the MI between latent representations and inputs.
Previous research has claimed a positive (potentially non-linear) correlation between both types of compression: MI is small if latent representations are clustered, cf.~\cite{goldfeld2019estimating}, and MI estimation is inherently geometric in many cases, cf.~\cite{Geiger_IPAReview}.
Across two independent experimental frameworks we find a consistent negative nonlinear correlation between MI and class-wise clustering.
This direction of correlation is opposite to what prior work suggested.
Moreover, the sign of the correlation is not fixed: it appears reversed by changing hyperparameters (e.g., $\alpha$ in Fig.~\ref{fig:observations}) affecting generalization ability, which indicates the presence of a confounder rather than a direct relationship.

These findings have concrete implications.
MI and geometric compression should not be used interchangeably as proxies for representation quality.
Also, claims about generalization based on either measure alone should be treated with caution.
The theoretical result extending finite MI guarantees to DNNs with multiplicative noise and arbitrary analytic activation functions opens the door to information-theoretic analysis of modern architectures, which we hope will support more rigorous future investigations.
A further caveat concerns the empirical estimation of MI itself: in high-dimensional spaces and for complex, non-Gaussian distributions, MI estimates can be highly sensitive to the choice of estimator.
Our conclusions are therefore conditioned on the estimators used in this work, and a systematic comparison across estimators remains an important direction for validating the robustness of the observed MI–NC relationship.
An important direction for future work is a proper causal analysis of the dependency structure among MI, geometric compression, and generalization — moving from the correlational evidence presented here to a mechanistic understanding of how these quantities interact during and after training.


%
%
%

\section*{Acknowledgments}
The work of Bernhard C. Geiger was partially funded by the Austrian Science Fund (FWF) under grant 10.55776/PAT7753623 and by the European Union’s HORIZON Research and Innovation Programme under grant agreement No 101120657, project ENFIELD (European Lighthouse to Manifest Trustworthy and Green AI). Know Center is a COMET competence center that is financed by the Austrian Federal Ministry of Innovation, Mobility and Infrastructure (BMIMI), the Austrian Federal Ministry of Economy, Energy and Tourism (BMWET), the Province of Styria, the Steirische Wirtschaftsförderungsgesellschaft m.b.H. (SFG), the Vienna business agency and the Standortagentur Tirol. The COMET programme is managed by the Austrian Research Promotion Agency FFG.

H.\ Petzka and A.\ Fischer are funded by the Deutsche Forschungsgemeinschaft (DFG, German Research Foundation) under Germany’s Excellence Strategy - EXC 2092 CASA - 390781972 and by the Ministry of Culture and Science of North Rhine-Westphalia as part of the Lamarr Fellow Network.

Adilova Linara was partially supported by the grant on ''Information-theoretic analysis of generalization in deep learning`` from the European Union, via the oc1-2024-TES-01-04 issued and implemented by the ENFIELD project, under the grant agreement No 101120657. 
Computational resources were provided by the German AI Service Center WestAI.

\bibliographystyle{splncs04}
\bibliography{references}

@inproceedings{adilovainformation,
  title={Information Plane Analysis for Dropout Neural Networks},
  author={Adilova, Linara and Geiger, Bernhard C and Fischer, Asja},
  year = {2023},
  booktitle={The Eleventh International Conference on Learning Representations}
}

@inproceedings{cheng2023bridging,
  title={Bridging Information-Theoretic and Geometric Compression in Language Models},
  author={Cheng, Emily and Kervadec, Corentin and Baroni, Marco},
  booktitle={Proceedings of the 2023 Conference on Empirical Methods in Natural Language Processing},
  pages={12397--12420},
  year={2023}
}

@inproceedings{goldfeld2019estimating,
  title={Estimating Information Flow in Deep Neural Networks},
  author={Goldfeld, Ziv},
  booktitle={International Conference on Machine Learning},
  year={2019}
}

@article{achille2018information,
  title={Information dropout: Learning optimal representations through noisy computation},
  author={Achille, Alessandro and Soatto, Stefano},
  journal={IEEE transactions on pattern analysis and machine intelligence},
  volume={40},
  number={12},
  pages={2897--2905},
  year={2018},
  publisher={IEEE}
}

@inproceedings{alemi2017deep,
  title={Deep Variational Information Bottleneck},
  author={Alemi, Alexander A and Fischer, Ian and Dillon, Joshua V and Murphy, Kevin},
  booktitle={International Conference on Learning Representations},
  year={2017}
}

@article{fischer2020conditional,
  title={The conditional entropy bottleneck},
  author={Fischer, Ian},
  journal={Entropy},
  volume={22},
  number={9},
  pages={999},
  year={2020},
  publisher={MDPI}
}

@article{papyan2020prevalence,
  title={Prevalence of neural collapse during the terminal phase of deep learning training},
  author={Papyan, Vardan and Han, XY and Donoho, David L},
  journal={Proceedings of the National Academy of Sciences},
  volume={117},
  number={40},
  pages={24652--24663},
  year={2020},
  publisher={National Acad Sciences}
}

@article{blier2018description,
  title={The description length of deep learning models},
  author={Blier, L{\'e}onard and Ollivier, Yann},
  journal={Advances in Neural Information Processing Systems},
  volume={31},
  year={2018}
}

@inproceedings{chechik2003information,
  title={Information bottleneck for {G}aussian variables},
  author={Chechik, Gal and Globerson, Amir and Tishby, Naftali and Weiss, Yair},
  booktitle={Advances in Neural Information Processing Systems},
  volume={16},
  year={2003}
}

@inproceedings{skean2024does,
  title={Does Representation Matter? Exploring Intermediate Layers in Large Language Models},
  author={Skean, Oscar and Arefin, Md Rifat and Shwartz-Ziv, Ravid},
  booktitle={Workshop on Machine Learning and Compression @ NeurIPS},
  year={2024}
}

@misc{Tishby_BlackBox,
  author = {R. Shwartz-Ziv and N. Tishby},
  title = {Opening the Black Box of Deep Neural Networks via Information},
  year = {2017},
  month = apr,
  howpublished = {{\tt arXiv:1703.00810v3 [cs.LG]}},
}

@ARTICLE{Geiger_IPAReview,
  author = {Geiger, Bernhard Claus},
  title = {On Information Plane Analyses of Neural Network Classifiers -- A Review},
  journal={IEEE Transactions on Neural Networks and Learning Systems},
  year = {2022},
  month=dec,
  pages={7039-7051},
  number={12},
  volume={33},
}

@MISC{Qian_MTVIB,
    title={Multi-Task Variational Information Bottleneck},
    author={Weizhu Qian and Bowei Chen and Yichao Zhang and Guanghui Wen and Franck Gechter},
    year={2020},
    month=jul,
    howpublished = {{\tt arXiv:2007.00339 [cs.LG]}},
}

@inproceedings{moyer2018invariant,
    author = {Moyer, Daniel and Gao, Shuyang and Brekelmans, Rob and Galstyan, Aram and Ver Steeg, Greg},
    booktitle = {Advances in Neural Information Processing Systems},
    pages = {},
    title = {Invariant Representations without Adversarial Training},
    volume = {31},
    year = {2018}
}

@ARTICLE{CLUB,
  author={Razeghi, Behrooz and Calmon, Flavio P. and Gunduz, Deniz and Voloshynovskiy, Slava},
  journal={IEEE Transactions on Information Forensics and Security}, 
  title={Bottlenecks {CLUB}: Unifying Information-Theoretic Trade-Offs Among Complexity, Leakage, and Utility}, 
  year={2023},
  volume={18},
  number={},
  pages={2060-2075},
  doi={10.1109/TIFS.2023.3262112}}

@inproceedings{Federici_Multiview,
  title={Learning Robust Representations via Multi-View Information Bottleneck},
  author={Marco Federici and Anjan Dutta and Patrick Forr\'e and Nate Kushman and Zeynep Akata},
  month=apr,
  address={virtual},
  booktitle={Proc. Int. Conf. on Learning Representations (ICLR)},
  howpublished={{\tt arXiv:2002.07017 [cs.LG]}},
  year={2020}
}

@misc{Sakamoto_Grokking,
    title={Explaining Grokking and Information Bottleneck through Neural Collapse Emergence},
    author={Keitaro Sakamoto and Issei Sato},
    year = {2025},
    month=sep,
    note = {{\tt arXiv:2509.20829}}
}

@INPROCEEDINGS{Saxe_IBTheory,
title={On the Information Bottleneck Theory of Deep Learning},
author={A. M. Saxe and Y. Bansal and J. Dapello and M. Advani and A. Kolchinsky and B. D. Tracey and D. D. Cox},
booktitle={Proc. International Conference on Learning Representations (ICLR)},
month=may,
address={Vancouver},
year={2018},
}

@inproceedings{Patel_LocalRank,
  title={Learning to Compress: Local Rank and Information Compression in Deep Neural Networks},
  author={Patel, Niket and Shwartz-Ziv, Ravid},
  booktitle={Workshop on Machine Learning and Compression @ NeurIPS},
  year={2024}
}

@inproceedings{galanti2021role,
  title={On the Role of Neural Collapse in Transfer Learning},
  author={Galanti, Tomer and Gy{\"o}rgy, Andr{\'a}s and Hutter, Marcus},
  booktitle={International Conference on Learning Representations},
  year={2021}
}

@article{globerson2003sufficient,
  title={Sufficient dimensionality reduction},
  author={Globerson, Amir and Tishby, Naftali},
  journal={Journal of Machine Learning Research},
  volume={3},
  number={Mar},
  pages={1307--1331},
  year={2003}
}

@ARTICLE{Geiger_VIBvsVCEB,
  author = {Bernhard C. Geiger and Ian S. Fischer},
  title = {A Comparison of Variational Bounds for the Information Bottleneck Functional},
  journal = {Entropy},
  volume = {22},
  number ={11},
  month=nov,
  pages={1229},
  year = {2020},
  note ={open-access}
}

@inproceedings{marx2022estimating,
  title={Estimating Mutual Information via Geodesic k NN},
  author={Marx, Alexander and Fischer, Jonas},
  booktitle={Proceedings of the 2022 SIAM International Conference on Data Mining (SDM)},
  pages={415--423},
  year={2022},
  organization={SIAM}
}

@inproceedings{cisse2017parseval,
  title={Parseval networks: Improving robustness to adversarial examples},
  author={Cisse, Moustapha and Bojanowski, Piotr and Grave, Edouard and Dauphin, Yann and Usunier, Nicolas},
  booktitle={International conference on machine learning},
  pages={854--863},
  year={2017},
  organization={PMLR}
}

@article{gomez2017reversible,
  title={The reversible residual network: Backpropagation without storing activations},
  author={Gomez, Aidan N and Ren, Mengye and Urtasun, Raquel and Grosse, Roger B},
  journal={Advances in neural information processing systems},
  volume={30},
  year={2017}
}

@inproceedings{zagoruyko2016wide,
  title={Wide Residual Networks},
  author={Zagoruyko, Sergey and Komodakis, Nikos},
  booktitle={British Machine Vision Conference 2016},
  year={2016},
  organization={British Machine Vision Association}
}

@inproceedings{huang2017densely,
  title={Densely connected convolutional networks},
  author={Huang, Gao and Liu, Zhuang and Van Der Maaten, Laurens and Weinberger, Kilian Q},
  booktitle={Proceedings of the IEEE conference on computer vision and pattern recognition},
  pages={4700--4708},
  year={2017}
}

@article{amjad2019learning,
  title={Learning representations for neural network-based classification using the information bottleneck principle},
  author={Amjad, Rana Ali and Geiger, Bernhard C},
  journal={IEEE Transactions on Pattern Analysis and Machine Intelligence},
  volume={42},
  number={9},
  pages={2225--2239},
  year={2019},
  publisher={IEEE}
}

@inproceedings{mcallester2020formal,
  title={Formal limitations on the measurement of mutual information},
  author={McAllester, David and Stratos, Karl},
  booktitle={International Conference on Artificial Intelligence and Statistics},
  pages={875--884},
  year={2020},
  organization={PMLR}
}

@article{lecun1998gradient,
  title={Gradient-based learning applied to document recognition},
  author={LeCun, Yann and Bottou, L{\'e}on and Bengio, Yoshua and Haffner, Patrick},
  journal={Proceedings of the IEEE},
  volume={86},
  number={11},
  pages={2278--2324},
  year={1998},
  publisher={IEEE}
}

@inproceedings{he2016deep,
  title={Deep residual learning for image recognition},
  author={He, Kaiming and Zhang, Xiangyu and Ren, Shaoqing and Sun, Jian},
  booktitle={Proceedings of the IEEE conference on computer vision and pattern recognition},
  pages={770--778},
  year={2016}
}

@article{simonyan2014very,
  title={Very deep convolutional networks for large-scale image recognition},
  author={Simonyan, Karen},
  journal={arXiv preprint arXiv:1409.1556},
  year={2014}
}

@article{dziugaite2017computing,
  title={Computing nonvacuous generalization bounds for deep (stochastic) neural networks with many more parameters than training data},
  author={Dziugaite, Gintare Karolina and Roy, Daniel M},
  journal={arXiv preprint arXiv:1703.11008},
  year={2017}
}

@article{petzka2021relative,
  title={Relative flatness and generalization},
  author={Petzka, Henning and Kamp, Michael and Adilova, Linara and Sminchisescu, Cristian and Boley, Mario},
  journal={Advances in neural information processing systems},
  volume={34},
  pages={18420--18432},
  year={2021}
}

@book{krantz2012realanalytic,
    author ={Krantz, Steven G. and Parks, Harold R.},
    title = {A primer of real analytics functions},
    publisher = {Birkhäuser Boston, MA},
    year = {2012}
}

@article{whitney1965varieties,
    author = {Whitney, H.},
    title = {Local properties of analytic varieties},
    journal = {Differential and Combinatorial Topology, Princeton Univ~Press},
    pages = {205--244},
    year = {1965}
}

@inproceedings{jiangfantastic,
  title={Fantastic Generalization Measures and Where to Find Them},
  author={Jiang, Yiding and Neyshabur, Behnam and Mobahi, Hossein and Krishnan, Dilip and Bengio, Samy},
  booktitle={International Conference on Learning Representations},
  year={2019}
}

@inproceedings{liang2019fisher,
  title={Fisher-rao metric, geometry, and complexity of neural networks},
  author={Liang, Tengyuan and Poggio, Tomaso and Rakhlin, Alexander and Stokes, James},
  booktitle={The 22nd international conference on artificial intelligence and statistics},
  pages={888--896},
  year={2019},
  organization={PMLR}
}

@inproceedings{keskar2016large,
  title={On Large-Batch Training for Deep Learning: Generalization Gap and Sharp Minima},
  author={Keskar, Nitish Shirish and Mudigere, Dheevatsa and Nocedal, Jorge and Smelyanskiy, Mikhail and Tang, Ping Tak Peter},
  booktitle={International Conference on Learning Representations},
  year={2016}
}

@inproceedings{kolchinskycaveats,
  title={Caveats for information bottleneck in deterministic scenarios},
  author={Kolchinsky, Artemy and Tracey, Brendan D and Van Kuyk, Steven},
  booktitle={International Conference on Learning Representations},
  year={2019}
}

@inproceedings{ting2025flatness,
    author = {Han, Ting and Adilova, Linara and Petzka, Henning and Kleesiek, Jens and Kamp, Michael},
    booktitle = {Advances in Neural Information Processing Systems},
    pages = {},
    title = {Flatness is Necessary, Neural Collapse is Not: Rethinking Generalization via Grokking},
    year = {2025}
}

@article{srivastava2014dropout,
  title={Dropout: a simple way to prevent neural networks from overfitting},
  author={Srivastava, Nitish and Hinton, Geoffrey and Krizhevsky, Alex and Sutskever, Ilya and Salakhutdinov, Ruslan},
  journal={The journal of machine learning research},
  volume={15},
  number={1},
  pages={1929--1958},
  year={2014},
  publisher={JMLR. org}
}

@misc{bhargava2021generalization,
      title={Generalization in NLI: Ways (Not) To Go Beyond Simple Heuristics}, 
      author={Prajjwal Bhargava and Aleksandr Drozd and Anna Rogers},
      year={2021},
      eprint={2110.01518},
      archivePrefix={arXiv},
      primaryClass={cs.CL}
}

@article{DBLP:journals/corr/abs-1908-08962,
  author    = {Iulia Turc and
               Ming{-}Wei Chang and
               Kenton Lee and
               Kristina Toutanova},
  title     = {Well-Read Students Learn Better: The Impact of Student Initialization
               on Knowledge Distillation},
  journal   = {CoRR},
  volume    = {abs/1908.08962},
  year      = {2019},
  url       = {http://arxiv.org/abs/1908.08962},
  eprinttype = {arXiv},
  eprint    = {1908.08962},
  bibsource = {dblp computer science bibliography, https://dblp.org}
}

@article{zhang2015character,
  title={Character-level convolutional networks for text classification},
  author={Zhang, Xiang and Zhao, Junbo and LeCun, Yann},
  journal={Advances in neural information processing systems},
  volume={28},
  year={2015}
}

@inproceedings{cheng2020club,
  title={CLUB: A Contrastive Log-ratio Upper Bound of Mutual Information},
  author={Cheng, Yu and Song, Jiaming and Miao, Yunchen},
  booktitle={International Conference on Machine Learning},
  year={2020},
  organization={PMLR}
}

@article{oord2018representation,
  title={Representation Learning with Contrastive Predictive Coding},
  author={van den Oord, Aaron and Li, Yazhe and Vinyals, Oriol},
  journal={arXiv preprint arXiv:1807.03748},
  year={2018}
}

@inproceedings{belghazi2018mine,
  title={Mutual Information Neural Estimation},
  author={Belghazi, Mohamed Ishmael and Baratin, Aristide and Rajeshwar, Devansh and Ozair, Sherjil and Bengio, Yoshua and Courville, Aaron and Hjelm, R Devon},
  booktitle={International Conference on Learning Representations},
  year={2018}
}

@book{reichenbach1991direction,
  title={The direction of time},
  author={Reichenbach, Hans},
  volume={65},
  year={1991},
  publisher={Univ of California Press}
}

@inproceedings{garrido2023rankme,
  title={Rankme: Assessing the downstream performance of pretrained self-supervised representations by their rank},
  author={Garrido, Quentin and Balestriero, Randall and Najman, Laurent and Lecun, Yann},
  booktitle={International conference on machine learning},
  pages={10929--10974},
  year={2023},
  organization={PMLR}
}

@inproceedings{patel2026learning,
  title={Learning Compact Representations via Intrinsic Dimension Regularization},
  author={Patel, Laksh and Bukkapatnam, Kaustubh S and Batra, Soham},
  booktitle={ICLR 2026 Workshop on Geometry-grounded Representation Learning and Generative Modeling},
  year={2026}
}

@article{agrawal2022alpha,
  title={Assessing Representation Quality in Self-Supervised Learning by measuring eigenspectrum decay},
  author={Agrawal, Kumar K and Mondal, Arnab Kumar and Ghosh, Arna and Richards, Blake},
  journal={Advances in Neural Information Processing Systems},
  volume={35},
  pages={17626--17638},
  year={2022}
}

\appendix
\appendix
\onecolumn
\newcommand{\RR}{\mathbb{R} }
\newcommand{\NN}{\mathbb{N}}
\newcommand{\Exp}[2]{\mathop{{}\mathbb{E}_{#1}} \left [ #2 \right ] }

\section{Proof of Theorem~\ref{thm:finiteMI}}
\label{app:proof}

To prove the theorem, we use the following proposition, which extends Proposition~3.5 of~\cite{adilovainformation} to non-constant real analytic activation functions. The proof then directly follows from Theorem~3.3 in~\cite{adilovainformation}.

\newcommand{\set}[1]{\mathcal{#1}}
\begin{proposition}\label{prop:finiteExp}
  Consider a non-constant deterministic DNN function $f{:}\ \set{X}\to\mathbb{R}$ constructed with finitely many layers, a finite number of neurons per layer, and real analytic activation functions. Let $X$ be a continuously distributed RV with PDF $p_x$ that is bounded ($0\le p(x) <C_p$ for all $x\in\set{X}$) and has support in a compact and connected set $K$. Then the conditional expectation $\mathbb{E}[\log(|f(X)|)]$ is finite.
\end{proposition}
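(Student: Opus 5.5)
The plan is to split $\log|f|$ into its positive and negative parts and bound each separately. Throughout, $f$ is real analytic on all of $\mathbb{R}^n$, since it is a finite composition of affine maps and real analytic activations, and real analytic functions are closed under composition. Two facts make the problem concrete. First, $f$ is continuous on the compact set $K$, so $|f|\le C_f$ there. This gives $\log|f(x)|\le \log C_f$ on $K$, and with $p< C_p$ the positive part of $\mathbb{E}[\log|f(X)|]$ is finite. Second, $f$ is non-constant and $\mathbb{R}^n$ is connected, so by the identity theorem $f\not\equiv 0$. Its zero set $\mathcal{Z}_f$ is therefore a proper analytic subset, which has Lebesgue measure zero and dimension at most $n-1$ (Lojasiewicz structure theorem). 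Hence $\log|f|$ is defined almost everywhere. (If $f$ is a nonzero constant the claim is trivial, so we may take $\mathcal{Z}_f\cap K\neq\emptyset$.) What remains is to show
\[
\int_K \log^-|f(x)|\,p(x)\,\mathrm{d}x \;\le\; C_p\int_K \log^-|f(x)|\,\mathrm{d}x<\infty .
\]

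For the negative part I will use the Lojasiewicz inequality on a compact neighbourhood $K'$ of $K$. It gives constants $C>0$ and $q\ge 1$ with $|f(x)|\ge C\,\mathrm{dist}(x,\mathcal{Z}_f)^q$ for $x\in K$. Consequently
\[
\log^-|f(x)|\le |\log C| + q\,\bigl|\log \mathrm{dist}(x,\mathcal{Z}_f)\bigr| + \mathrm{const}.
\]
This reduces the problem to a purely geometric statement: $\int_K |\log \delta(x)|\,\mathrm{d}x<\infty$, where $\delta(x)=\mathrm{dist}(x,\mathcal{Z}_f\cap K')$. Only the region $\delta<1$ matters, since $\delta$ is bounded on $K$. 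There the layer-cake formula gives
\[
\int_{K\cap\{\delta<1\}} -\log\delta\,\mathrm{d}x=\int_0^1 \frac{\mathrm{vol}\bigl(K\cap\{\delta<t\}\bigr)}{t}\,\mathrm{d}t .
\]
So it suffices to prove a tube estimate: $\mathrm{vol}(K\cap\{\delta<t\})\le C' t$ for small $t$.

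The tube estimate is the step I expect to be the main obstacle. I would prove it with the stratification from the structure theorem. By compactness, $\mathcal{Z}_f\cap K'$ is covered by finitely many strata $M_j$, each a bounded smooth (indeed analytic) submanifold of dimension $k_j\le n-1$, with finite $k_j$-dimensional volume. The volume of the $t$-neighbourhood of each stratum is then bounded by $C_j t^{\,n-k_j}\le C_j t$ for $t\le 1$. One way to see this is to integrate in normal polar coordinates around $M_j$, as in the proof sketch. There the Jacobian is $r^{n-k_j-1}$, and the local integrals become $\int_0^\epsilon |\log r|\, r^{n-k_j-1}\,\mathrm{d}r<\infty$. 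The delicate points are near lower-dimensional strata, where higher strata accumulate and their normal coordinates degenerate. I would handle these first by inducting on dimension: near a stratum of dimension $k$, use a Whitney-regular stratification so that tubes around strata of dimension greater than $k$ stay controlled. If this becomes messy, the fallback is to cite the standard fact that a compact subanalytic set of dimension at most $n-1$ has finite upper $(n-1)$-dimensional Minkowski content, which is exactly $\mathrm{vol}(\{\delta<t\})=O(t)$. Summing the finitely many strata gives the estimate, hence $\int_K\log^-|f|\,\mathrm{d}x<\infty$. Combined with the upper bound from the first paragraph, this shows $\mathbb{E}[\log|f(X)|]$ is finite.
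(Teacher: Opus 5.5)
Your argument is correct. It shares the paper's skeleton: an upper bound from continuity of $f$ on the compact $K$, the Lojasiewicz structure theorem to see that $\mathcal{Z}_f$ is a null set, and the Lojasiewicz inequality to reduce everything to integrability of $\log\mathrm{dist}(x,\mathcal{Z}_f)$. The two routes part at the final geometric step.

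\begin{itemize}
\item \textbf{The paper's route.} It splits the $\epsilon$-neighbourhood into the regions $N_j^\epsilon$ where the distance is realized by a stratum $M_j$, then integrates $\log(r)\,r^{n-d_j-1}$ directly in normal polar coordinates. This tacitly treats each $M_j$ as having a tubular neighbourhood of uniform radius $\epsilon$, with unit Jacobian and finite $d_j$-volume. That is exactly what is unjustified near the frontier of $M_j$, where the stratum is non-compact and its curvature and normal coordinates degenerate.
\item \textbf{Your route.} The layer-cake formula turns the integral into $\int_0^1 \mathrm{vol}(K\cap\{\delta<t\})\,t^{-1}\,\mathrm{d}t$. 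This isolates the single geometric input needed, a tube-volume bound; any bound $O(t^\alpha)$ with $\alpha>0$ would already suffice.
\item \textbf{What closes the proof.} Your stratified sketch of the tube bound is only outlined at precisely the point the paper passes over, so it is your fallback citation that actually finishes the argument. That fallback is sound. By the uniformization theorem for subanalytic sets, a compact subanalytic set of dimension $k$ is a finite union of Lipschitz images of $[0,1]^k$. Covering $[0,1]^k$ by $O(t^{-k})$ cubes of side $t$ then shows its $t$-neighbourhood has volume $O(t^{n-k})$.
\end{itemize}

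Your route is the more modular and rigorous one. The paper's direct computation shows more explicitly where the terms $\int_0^\epsilon \log(r)\,r^m\,\mathrm{d}r$ come from.

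Three small repairs are needed.
\begin{itemize}
\item You may assume $\mathcal{Z}_f\cap K'\neq\emptyset$, but not because $f$ might be a nonzero constant (that is excluded anyway). The reason is that otherwise $|f|$ is bounded away from $0$ on $K$ by compactness.
\item The passage from $\mathrm{dist}(x,\mathcal{Z}_f)$ to $\delta(x)$ deserves a line. Take $K'$ to be a closed ball containing the $\eta$-neighbourhood of $K$. Then the two distances agree whenever $\mathrm{dist}(x,\mathcal{Z}_f)<\eta$, and otherwise $|f(x)|\ge C\eta^q$. This choice of $K'$ also makes $\mathcal{Z}_f\cap K'$ semianalytic, so the subanalytic citation applies.
\item Finite $k_j$-volume of a bounded stratum holds because the strata are semianalytic, not merely because they are bounded; a bounded smooth curve can have infinite length. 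The paper glosses over this point too.
\end{itemize}
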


\begin{proof}
By the law of unconscious statistician, we have
\begin{equation}\label{finiteExpectation} 
\Exp{X}{\log(|f(X)|)} = \int_K \log(|f(x)|) p(x) \mathrm{d}x 
\end{equation}
As $p(x)$ is bounded and $K$ is bounded, it is clear that  $\Exp{X}{\log(|f(X)|)} < \infty$. It remains to investigate the behavior around $f(X)=0$ to show $\Exp{X}{\log(|f(X)|)} > -\infty$. Since $f$ is a composition of real analytic functions, $f$ is itself real analytic. We will make use of the well-behaved structure of real analytic functions to bound the expectation.

The idea of the proof is as follows: The set of zeros of a real analytic function is a set of measure zero. However, this does not suffice to guarantee finiteness of $\Exp{X}{\log(|f(X)|)}$ as the behavior around these zeros matters. In one dimension, we can directly use the fact that an analytic function $f:\RR \rightarrow \RR$ locally behaves polynomially around zeros. In higher dimensions, an analytic function can remain 0 along submanifolds of lower dimension. The natural analog to the behavior in one dimension is that the function behaves polynomially as a function of the distance to the set of zeros. Once we know that $f$ behaves polynomially, we can use that $\int_0^\epsilon \log(r) q(r) \mathrm{d}r$ is finite for any polynomial $q(r)$. 

Technically, we apply the Lojasiewicz's Structure Theorem (see Theorem~6.3.3 in \cite{krantz2012realanalytic}), which states that the set of zeros of a non-constant real analytic function is a finite collection of algebraic varieties of dimension at most $n-1$. In particular, the set of zeros is a null set and eq.~\eqref{finiteExpectation} is well-defined. Let $\mathcal{Z}_{f,i}, i=1,\ldots,k$ denote the connected components of $\mathcal{Z}_{f}$. 



Moreover, the Lojasiewicz inequality (see Theorem~6.3.4 in \cite{krantz2012realanalytic}), gives that around zeros of $f$, the function $f$  behaves polynomially in the distance to the zero: That is, let $\mathcal{Z}_{f}$ be the set of zeros of $f$ and $dist(x,\mathcal{Z}_{f})$ the distance function from $x$ to $\mathcal{Z}_{f}$. Let $U$ be an open neighborhood with non-zero intersection with $\mathcal{Z}_{f}$. Then, for each compact subset $E$ of $U$, there is a constant $C_L>0$ and a natural number $q$ such that
\begin{equation}\label{lojaInequality}
|f(x)| \geq C_L\cdot  dist(x,\mathcal{Z}_{f})^q
\end{equation}
for all $z$ in $E$.

Consider now $\epsilon$-neighborhoods $N_i^\epsilon, i=1,\ldots k$ around the connected components $\mathcal{Z}_{f,i}$, where $\epsilon$ is chosen such that $|f(x)|<1$ for all $x\in N_i^\epsilon \cap K$ and all $i$. Then,
\begin{equation}
\Exp{X}{\log(|f(X)|)} > \sum_i^{finite}  \int_{N_i^\epsilon} \log(|f(x)|)\cdot p(x) \ \mathrm{d}x  +  \int_{K\setminus \bigcup_i N_i^\epsilon} \log(|f(x)|)\cdot p(x)\ \mathrm{d}x.
\end{equation}
The last integral is bounded below by boundedness of $K$, continuity of $f$, and the fact that $x$ has at least distance $\epsilon>0$  to any zero of $f$. It remains to show that each  $\int_{N_i^\epsilon} \log(|f(x)|)\cdot  p(x)\ \mathrm{d}x>-\infty$. We fix $i$ in the following and remove it from the notation for readability.

We use the Lojasiewicz inequality \eqref{lojaInequality} to find $C_L>0$ and $q\in \NN$ such that $|f(x)| \geq C_L\cdot  dist(x,\mathcal{Z}_{f})^q$ on $N^\epsilon$. 

By using Whitney stratification \cite{whitney1965varieties}, a real analytic algebraic variety can be decomposed into a finite number of connected smooth submanifolds, $\mathcal{Z}_{f} = \bigcup_j^{finite} M_{j}$. Let $d_{j}$ denote the dimension of $M_{j}$. We know that $0\leq d_{j} < n$.

Using the decomposition, we will need that $f$ locally behaves polynomially in distance to each $M_j$. Since eq.~\eqref{lojaInequality} only measures the distance to the variety $\mathcal{Z}_{f}$, we will need to partition the space into regions where the distance to $\mathcal{Z}_{f}$ is defined by the distance to $M_j$. For this, we let  
\[N_{j}^{\epsilon} = \left \{ x \in N^{\epsilon}\ |\ dist(x,M_j) = dist(x,\mathcal{Z}_{f}) \right \}\ .\]
Then 
\[\bigcup_j N_{j}^{\epsilon} = N^\epsilon \]
since every distance needs to be realized by at least one of the connected smooth manifold. 

For each $j$, we introduce local coordinate systems $x^{j}= (x^{j}_1,\ldots, x^{j}_{d_j})$ of the manifold $M_{j}$ and $x^{j,\perp} = (x^{j,\perp}_1,\ldots,x^{j,\perp}_{n-d_j})$ of its normal space. 
Now we calculate the integral:

\begin{align*}
\int_{N^\epsilon} \log(|f(x)|) p(x) dx &\geq \sum_j^{finite} \int_{N_{j}^{\epsilon}}  \log(|f(x)|)\cdot p(x) \mathrm{d}x\\
&\geq \sum_j^{finite} \int_{N_{j}^{\epsilon}}  \log(|f(x)|)\cdot C_p\ \mathrm{d}x\\ 
&\geq \sum_j^{finite} \int_{N_{j}^{\epsilon}}  \log(C_L\cdot  dist(x,Z)^q)\cdot C_p\ \mathrm{d}x\\
 & = \sum_j^{finite} \int_{N_{j}^{\epsilon}} \log(C_L\cdot  dist(x,M_{j})^q) \cdot C_p\ \mathrm{d}x\\
  & \geq  \sum_j^{finite} \int_{M_j}\int_{B^{n-d}_\epsilon(0)} \log(C_L\cdot  dist(x,M_j)^q) \cdot C_p\ \mathrm{d}x^\perp\  \mathrm{d}x^j\\
 & =  \sum_j^{finite} \int_{M_j} vol_{n-d_j-1}(S^{n-d_j-1}) \int_{r=0}^\epsilon \log(C_L\cdot r^q) \cdot C_p\ r^{n-d_j-1}\ \mathrm{d}r\ \mathrm{d}x^j\\
 &  =  \sum_j^{finite} C_p\cdot vol_{d_j}(M_j) \cdot vol_{n-d_j-1}(S^{n-d_j-1})\\
 &\hspace{2cm}\cdot \left (\frac{\epsilon^{n-d_j}}{n-d_j} \log(C_L) + q\cdot \int_{r=0}^\epsilon \log(r) \cdot r^{n-d_j-1} \mathrm{d}r \right )\\
 & > -\infty .\\
\end{align*}
The final inequality holds since $M_j$ lies in a bounded set and the last integral is finite for $n-d_j-1\geq0$, i.e., $d_j< n$.
\end{proof}

\section{Experiment Details}
\label{app:experiment}

\subsection{Correlation Estimation}
\label{sec:correlation_computation}
Mathematically, if $\{a_i^j\}$ and $\{b_i^j\}$, $i=1,\dots, N_j$, are quantities recorded in the $j$-th experiment by varying a certain parameter, let $r_{a,i}^j$ and $r_{a,i}^j$ denote the ranks of $a_i^j$ and $b_i^j$ such that $r_{a,i}^j=1$ if $a_i^j>a_\ell^j$ for all $\ell\neq i$, etc.
The ranks of the $j$-th experiment are then min-max normalized to the range of $[0,1]$, and the normalized ranks are collected in a vector $\tilde{r}_a=[\tilde{r}_{a,1}^1,\tilde{r}_{a,2}^1,\dots,\tilde{r}_{a,N_1}^1,\tilde{r}_{a,1}^2,\dots,\tilde{r}_{a,N_M}^M]$.
We then report the Pearson correlation coefficients between $\tilde{r}_a$ and $\tilde{r}_b$.

\subsection{Training Hyper Parameters}
\label{sec:hyperparameters}
For all the setups in CEB experiments we employ Adam optmizer with learning rate $1e-3$.
We trained models for $150$ epochs with exponential learning rate scheduler with $\gamma=0.97$.
Batch size is selected as following:
\begin{enumerate}
    \item FC + FMNIST: $64$
    \item LeNet5 + FMNIST: $64$
    \item WRN28-4 + CIFAR-10: $256$
    \item DenseNet-121 + CIFAR-100: $256$
\end{enumerate}

For the setups in Gaussian dropout experiments the following hyper parameters were used:
\begin{enumerate}
    \item ResNet18 + CIFAR-10: batch size $128$, learning rate $0.1$, training for $100$ epochs with SGD with weight decay of $5e-4$, momentum $0.9$ and cosine annealing learning rate scheduler.
    \item VGG11 + SVHN: batch size $256$, learning rate $0.01$, training for $150$ epochs with SGD with weight decay of $5e-4$, momentum $0.9$ and cosine annealing learning rate scheduler.
    \item DenseNet-121 + CIFAR-100: batch size $256$, learning rate $0.1$, training for $200$ epochs with SGD with weight decay of $5e-4$, momentum $0.9$ and cosine annealing learning rate scheduler.
    \item mini-BERT + AG News: batch size $256$, learning rate $1e-5$, training for $20$ epochs with AdamW with weight decay of $1e-2$ and cosine annealing learning rate scheduler.
\end{enumerate}

\subsection{Per-seed Confidence Intervals for Dropout Setup}
\label{sec:per_seed_confidence}
In Table~\ref{app:tab:seed_correlations}, we report per-seed correlations for the Gaussian dropout setup, aggregated to obtain a mean and standard deviation, alongside the overall (pooled) correlation for comparison.
The mean values closely match those obtained from the pooled correlations, while the confidence intervals for strong correlations are notably narrow.
Note that these correlations differ slightly from those in the main paper, since one of the setups was run with a different set of random seeds and could therefore not be included in the per-seed aggregation.

\definecolor{highlightrow}{gray}{0.9}

\begin{table}[!ht]
\centering

\begin{subtable}{0.5\textwidth}
    \centering
    \begin{tabular}{l S[table-format=-1.2] S[table-format=-1.2]}
        \toprule
        & {Train} & {Test} \\
        \midrule
        acc $\|$ $\lambda$  & -0.06 & -0.28 \\
        gen $\|$ $\lambda$  &  0.05 &  {---} \\
        MI  $\|$ $\lambda$  & -0.53 & -0.3 \\
        NC  $\|$ $\lambda$  &  0.96 &  0.94 \\
        \cmidrule(lr){1-3}
        \rowcolor{highlightrow}
        \textbf{MI $\|$ NC} & \textbf{-0.52} & \textbf{-0.28} \\
        \cmidrule(lr){1-3}
        acc $\|$ MI         & -0.19 &  0.08 \\
        acc $\|$ NC         & -0.09 & -0.32 \\
        gen $\|$ MI         &  0.11 &  {---} \\
        gen $\|$ NC         &  0.04 &  {---} \\
        \bottomrule
    \end{tabular}
    \caption{Gaussian dropout ($\lambda$ sweep).}
    \label{app:tab:gaus}
\end{subtable}%
\hfill
\begin{subtable}{0.5\textwidth}
    \centering
    \begin{tabular}{l *{2}{>{\centering\arraybackslash}p{1.7cm}}}
        \toprule
        & {Train} & {Test} \\
        \midrule
        acc $\|$ $\lambda$  & $-0.08\pm0.08$ & $-0.27\pm0.14$ \\
        gen $\|$ $\lambda$  &  $0.04\pm0.09$ &  {---} \\
        MI  $\|$ $\lambda$  & $-0.53\pm0.04$ & $-0.3\pm0.14$ \\
        NC  $\|$ $\lambda$  &  $0.96\pm0.01$ &  $0.94\pm0.02$ \\
        \cmidrule(lr){1-3}
        \rowcolor{highlightrow}
        \textbf{MI $\|$ NC} & $-0.51\pm0.04$ & $-0.28\pm0.14$ \\
        \cmidrule(lr){1-3}
        acc $\|$ MI         & $-0.14\pm0.04$ &  $0.06\pm0.11$ \\
        acc $\|$ NC         & $-0.09\pm0.08$ & $-0.27\pm0.14$ \\
        gen $\|$ MI         &  $0.13\pm0.11$ &  {---} \\
        gen $\|$ NC         &  $0.02\pm0.08$ &  {---} \\
        \bottomrule
    \end{tabular}
    \caption{Gaussian dropout per-seed confidence.}
    \label{app:tab:gaus_seed}
\end{subtable}
\caption{Rank correlations between MI, NC, accuracy (\emph{acc}), and generalization gap (\emph{gen}) across Gaussian dropout experiments. Left table reports combined correlation, while right aggregates per-seed and reports confidence intervals.}
\label{app:tab:seed_correlations}
\end{table}

\subsection{Why DoE?}
In this section we summarize the theoretical basis of DoE, explain why it yields a lower bound on MI, and describe the neural density models used to approximate the marginal and conditional distributions.
We also include the role of the logistic-based DoE formulation and the construction of negative samples.

For any random variable $Z$ with true density $p(z)$, the entropy is
\begin{equation*}
    H(Z) = -\mathbb{E}_{Z}[\log p(z)].
\end{equation*}
DoE replaces the intractable $\log p(z)$ with the log-density of a neural estimator $q_{\phi}(z)$, trained by maximum likelihood on samples of $z$:
\begin{equation*}
    \hat{H}(Z) = -\mathbb{E}_{Z}[\log q_{\phi}(z)].
\end{equation*}
Using the standard decomposition,
\begin{equation*}
    -\mathbb{E}_{Z}[\log q_{\phi}(z)]
    =
    -\mathbb{E}_{p(z)}[\log p(z)] + \mathrm{KL}\bigl(p(z)\,\|\,q_{\phi}(z)\bigr),
\end{equation*}
we immediately obtain
\begin{equation*}
    \hat{H}(Z) = H(Z) + \mathrm{KL}(p(z)\,\|\,q_{\phi}(z)) \ge H(Z).
\end{equation*}
Thus, a variational estimator cannot underestimate entropy; it always overestimates it by the non-negative KL divergence.

The same argument applies to the conditional entropy:
\begin{equation*}
    \widehat{H}(Z|X)
    = -\mathbb{E}_{p(x,z)}[\log q_{\theta}(z|x)]
    = H(Z|X) + \mathbb{E}_{p(x)}\mathrm{KL}(p(z|x)\,\|\,q_\theta(z|x)).
\end{equation*}
Again, the estimate is an overestimate of the true conditional entropy.

MI can be expressed with entropies as follows
\begin{equation*}
    I(X;Z) = H(Z) - H(Z|X).
\end{equation*}
DoE substitutes the variational surrogates:
\begin{equation*}
    \hat{I}_{\text{DoE}} = \hat{H}(Z) - \widehat{H}(Z|X).
\end{equation*}
Substituting their decompositions yields
\begin{align*}
\hat{I}_{\text{DoE}}
&= H(Z) + \mathrm{KL}(p(z)\|q_\phi) - \left( H(Z|X) + \mathbb{E}_{p(x)} \mathrm{KL}(p(z|x)\|q_\theta) \right) \\
&= I(X;Z) - \Big( \mathbb{E}_{p(x)}\mathrm{KL}(p(z|x)\|q_\theta) - \mathrm{KL}(p(z)\|q_\phi) \Big).
\end{align*}
The conditional KL is typically much larger than the marginal KL.
No matter that \(p(z| x)\) is usually a unimodal distribution, while \(p(z)\) is a
multimodal mixture over all inputs, estimating
\(H(Z\mid X)\) often requires a more expressive model because learning
the mapping \((x,z)\mapsto \text{conditional code length}\) is
estimator–wise more difficult than modeling the marginal \(p(z)\).
Hence,
\begin{equation*}
    \mathbb{E}_{p(x)}\mathrm{KL}(p(z|x)\|q_\theta) \gg \mathrm{KL}(p(z)\|q_\phi),
\end{equation*}
which implies
\begin{equation*}
    \hat{I}_{\text{DoE}} \le I(X;Z).
\end{equation*}
Thus DoE produces a lower bound on MI through upper bounds on corresponding entropies.

\paragraph{Neural Density Models Used in DoE}
DoE models both the marginal and conditional densities with neural networks:
\begin{itemize}
    \item A marginal density estimator $q_{\phi}(z)$, trained on samples $\{z_i\}$.
    \item A conditional density estimator $q_{\theta}(z|x)$, trained on pairs $\{(x_i, z_i)\}$.
\end{itemize}
The density estimator employed in the logistic-based DoE variant is a factorized logistic mixture model.
The logistic parameterization is more flexible than a Gaussian model and avoids the assumption of unimodality, which is particularly important for $p(z|x)$.

For the ``logistic'' option, DoE models the marginal or conditional distribution of $Y \in \mathbb{R}^d$ as a diagonal Logistic distribution with parameters $\mu \in \mathbb{R}^d, s = \exp(\ln\mathrm{var}) \in \mathbb{R}^d_{>0}$.
For a single dimension, the Logistic density is
\[
p(y \mid \mu, s)
    = \frac{\exp\!\left( -\frac{y - \mu}{s} \right)}%
           {s\left(1 + \exp\!\left( -\frac{y - \mu}{s} \right)\right)^{2}}.
\]
Define the standardized variable $w = \frac{y - \mu}{s}$.
The exact negative log-likelihood is
\[
-\log p(y \mid \mu, s) = w + 2\log\!\bigl(1 + e^{-w}\bigr) + \log s.
\]
In the vector case ($y,\mu,s \in \mathbb{R}^d$), the code computes
\[
\mathcal{L}(Y)
  = \frac{1}{B}
    \sum_{b=1}^B
    \sum_{i=1}^d 
    \left[
         w_{b,i}
        + 2 \log\!\bigl(1 + e^{-w_{b,i}}\bigr)
        + \log s_i
    \right],
\]
where $B$ is the batch size.
In the implementation we compute:
\[
\texttt{whitened} = (Y - \mu)/s \quad \longleftrightarrow \quad w,
\]
\[
\texttt{adjust} = \mathrm{softplus}(-w)
    = \log\left( 1 + e^{-w} \right),
\]
and the returned value
\[
\texttt{negative\_ln\_prob}
  = \mathrm{mean}_b\!\left[
          \sum_{i=1}^d 
          \Bigl( w_{b,i}
            + 2\,\mathrm{softplus}(-w_{b,i})
            + \ln s_i
          \Bigr)
    \right]
\]
is exactly the multidimensional negative log-likelihood of a factorized Logistic distribution.

\subsubsection{MI Estimators Comparison}
\label{sssec:mi_compare}
We estimated MI using MINE~\cite{belghazi2018mine} and CLUB~\cite{cheng2020club} in one of the configurations, ResNet18 on CIFAR-10.
With CLUB we obtained quite good convergence (Fig.~\ref{fig:other_mi_converg}(b)) and a qualitatively similar picture to DoE except for two outliers (Fig.~\ref{fig:other_mi_correl}(b, c)).
It is visible still, that DoE achieves significantly better convergence (Fig.~\ref{fig:doe}(a)).
\begin{figure}[!ht]
    \centering
         \begin{subfigure}[b]{0.3\textwidth}
         \centering
         \includegraphics[width=\linewidth]{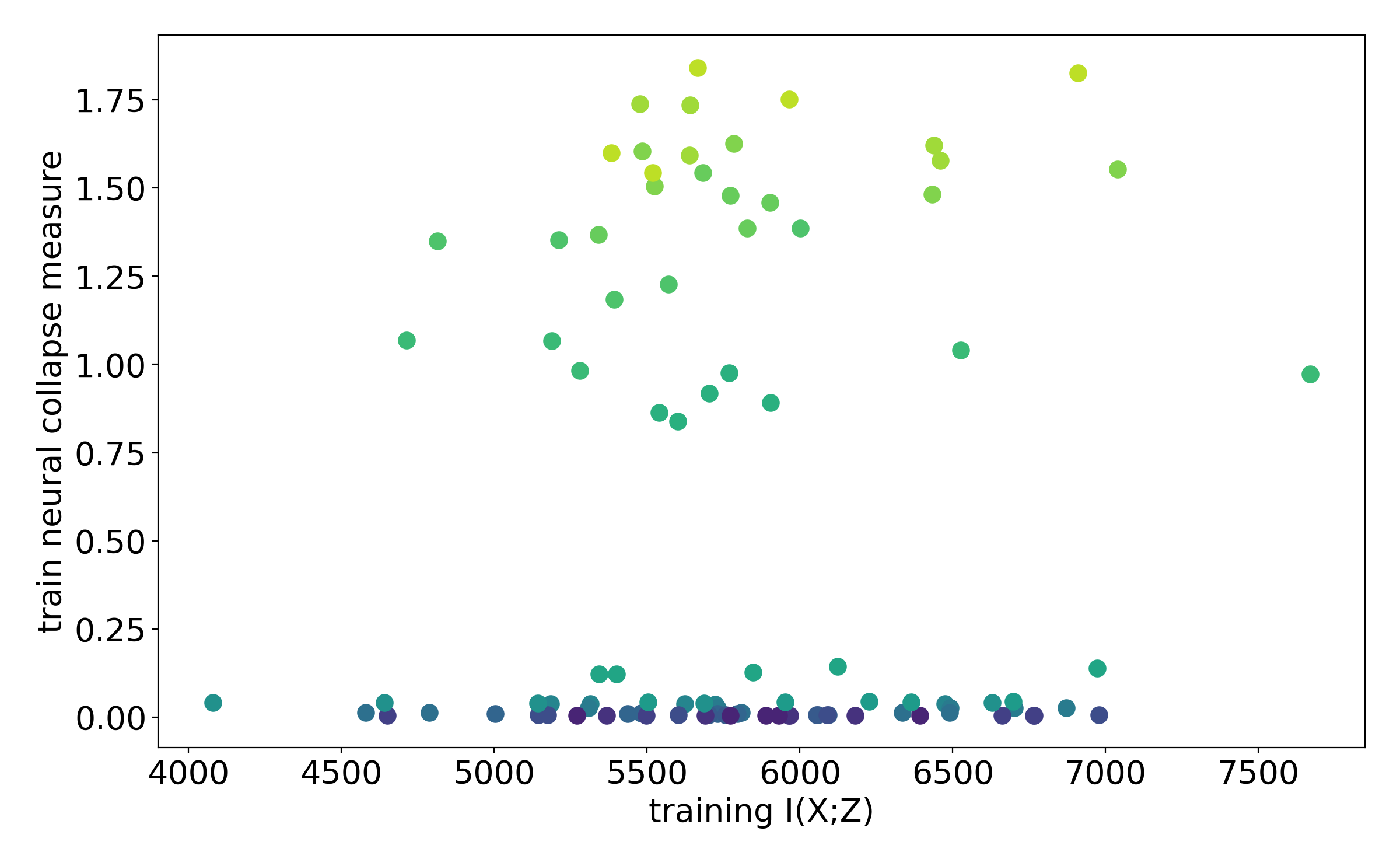}
         \caption{MI estimated with  MINE}
     \end{subfigure}
     \hfill
              \begin{subfigure}[b]{0.3\textwidth}
         \centering
         \includegraphics[width=\linewidth]{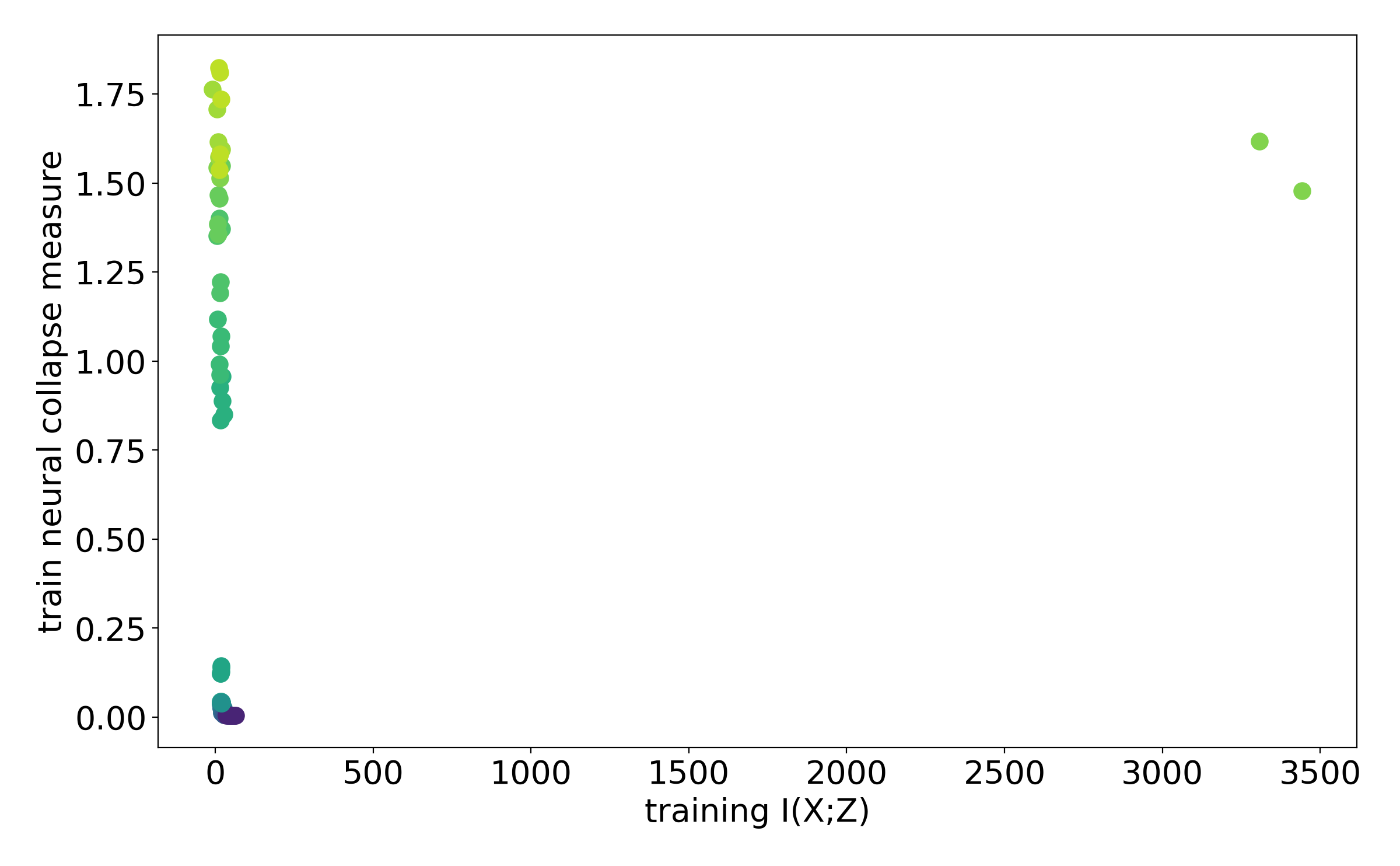}
         \caption{MI estimated with CLUB}
     \end{subfigure}
     \hfill
              \begin{subfigure}[b]{0.3\textwidth}
         \centering
         \includegraphics[width=\linewidth]{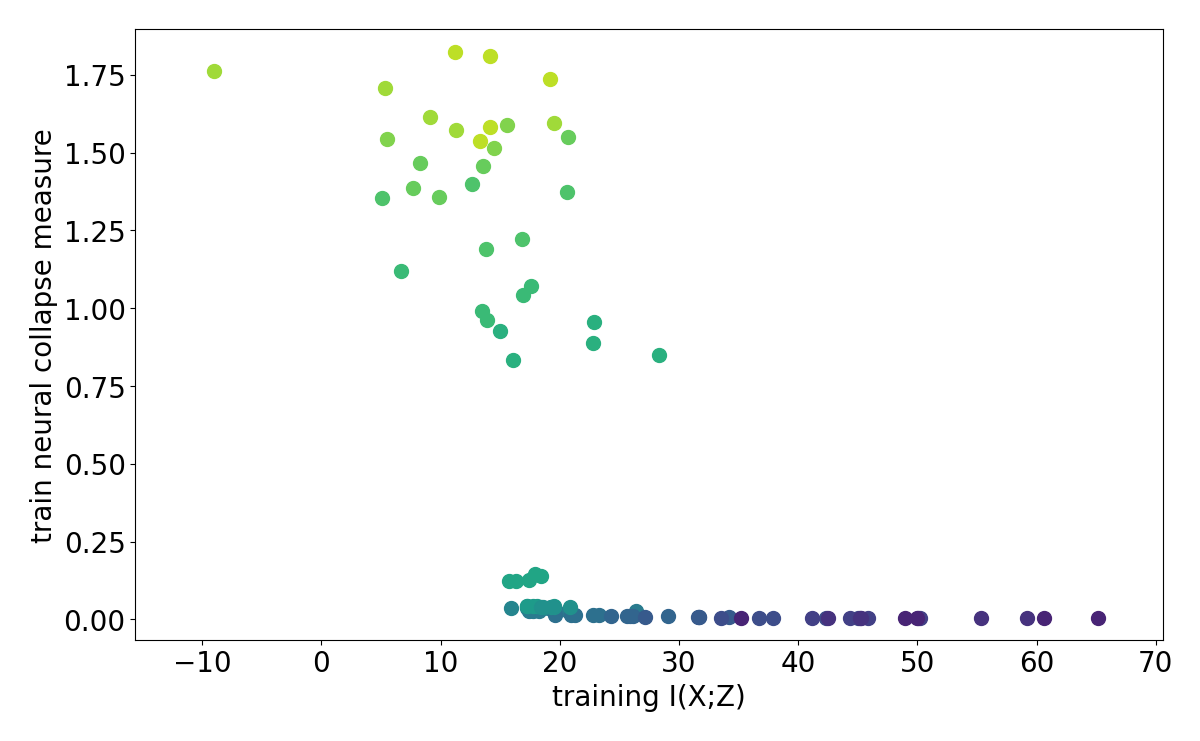}
         \caption{MI with CLUB, w/o outliers}
     \end{subfigure}
    \caption{Correlation plots between MI and geometric compression produced by MINE and CLUB estimator on ResNet18 representations with CIFAR-10 data.}
     \label{fig:other_mi_correl}
\end{figure}
With MINE the picture is more distorted (Fig.~\ref{fig:other_mi_correl}(a)), but it has nearly no convergence in the same setup as DoE and CLUB (Fig.~\ref{fig:other_mi_converg}(a)), since as explained in \cite{mcallester2020formal} MINE requires an exponential amount of samples for convergence.
\begin{figure}[!ht]
    \centering
         \begin{subfigure}[b]{0.45\textwidth}
         \centering
         \includegraphics[width=\linewidth]{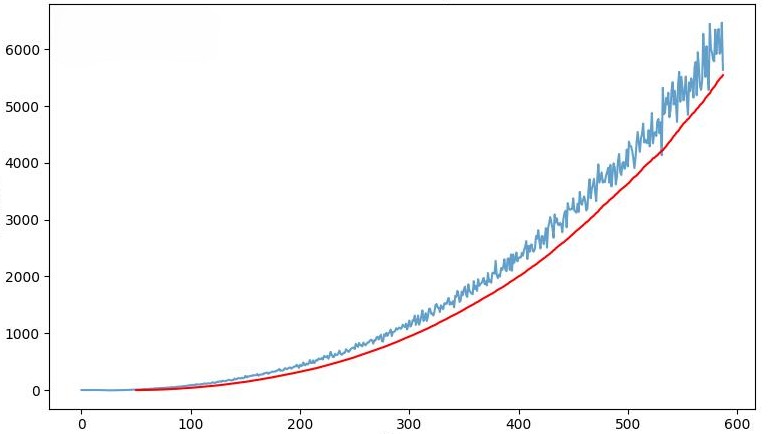}
         \caption{Train data convergence of MINE estimator}
     \end{subfigure}
     \hfill
              \begin{subfigure}[b]{0.45\textwidth}
         \centering
         \includegraphics[width=\linewidth]{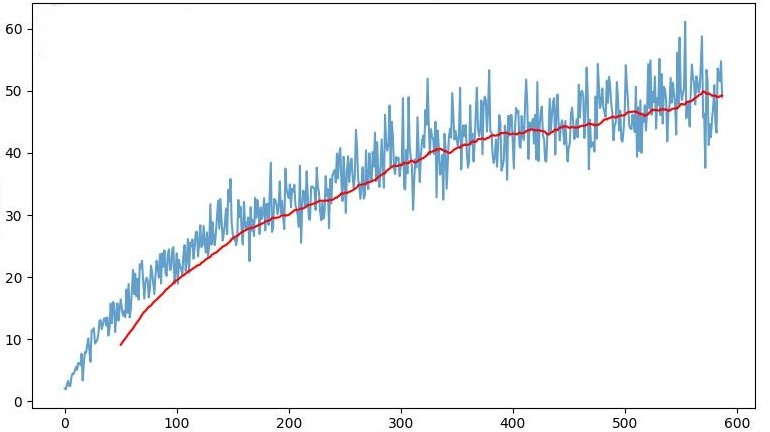}
         \caption{Train data convergence of CLUB estimator}
     \end{subfigure}
    \caption{MINE and CLUB estimators convergence, when trained on ResNet18 representations with CIFAR-10 data.}
     \label{fig:other_mi_converg}
\end{figure}
We did not consider InfoNCE~\cite{oord2018representation} since its estimation is capped by the batch size used for estimation, and our measured MI can be significantly larger than a batch size with which we can train the estimator.

\subsection{DoE Hyperparameters}
\label{sec:doe_hyperparam}
In our implementation the encoder $f_\psi(x)$ produces deterministic features $z = f_\psi(x)$, but we use dropout inside the encoder to obtain multiple stochastic views of the same input.
For each input $x_i$ we generate
\begin{equation}
    z_i^{(1)}, z_i^{(2)}, \ldots, z_i^{(K)},
\end{equation}
where each $z_i^{(k)} = f_\psi(x_i;\,\epsilon_k)$ corresponds to an independent dropout mask $\epsilon_k$.
These samples should not be treated as independent draws from the marginal $p(z)$; they share the same underlying input 
and differ only due to injected noise.
Consequently, they cannot be used as negative samples against each other.

If the dropout-generated samples $\{z_i^{(k)}\}$ were used as negatives for the same $x_i$, the objective would force the conditional density estimator $q_\theta(z|x)$ to assign low likelihood to dropout variants of the same input.
This contradicts the intended semantics of $q_\theta(z|x)$, which should assign high density to all stochastic realizations of $z$ arising from the same input $x$.
In other words, these samples approximate samples from the same conditional distribution $p(z|x_i)$ and are therefore all positive examples, not negatives.

For each input $x_i$ we use:
\begin{itemize}
    \item \textbf{Positive set:} all dropout-generated features 
        $\{z_i^{(1)},\ldots,z_i^{(K)}\}$.
    \item \textbf{Negative set:} dropout samples of \emph{other} inputs,
        that is
        \begin{equation}
            \mathcal{N}_i
            =
            \{ z_j^{(k)} \;|\; j\neq i,\; k=1,\ldots,K\}.
        \end{equation}
        These approximate draws from the empirical marginal $p(z)$ and are valid negatives for $x_i$.
\end{itemize}
Thus, for a minibatch of size $B$ with $K$ dropout samples each, we obtain:
\begin{itemize}
    \item $K$ positive samples for each of the $B$ inputs;
    \item $(B-1)K$ negative samples per input.
\end{itemize}

The logistic DoE objective becomes
\begin{equation}
\mathcal{L}_{\text{cond}}
=
\sum_{i=1}^{B}
\sum_{k=1}^{K}
\left[
    \log q_{\theta}(z_i^{(k)}|x_i)
    -
    \log\left(
        \sum_{z \in \mathcal{N}_i \cup \{z_i^{(1)},\ldots,z_i^{(K)}\}}
        q_{\theta}(z|x_i)
    \right)
\right].
\end{equation}
Note that all dropout samples of the same input appear in both the numerator (as positives) and in the normalizer of the denominator (to retain the standard noise-contrastive form), but they are never treated as negatives.

This construction preserves the semantics of the conditional model:
\begin{itemize}
    \item $q_\theta(z|x_i)$ learns to assign high likelihood to all natural 
          stochastic variations of features arising from $x_i$,
    \item and low likelihood to samples originating from other inputs $x_j$.
\end{itemize}
Thus the use of dropout increases the effective sample size of $p(z|x)$ without corrupting the negative sampling structure, leading to a more faithful estimation of the conditional entropy.

DoE estimator of conditional entropy was selected depending on the data: For images we employ a simple convolutional architecture and for text a two-layer attention network.
We train both estimators with AdamW optimizer with learning rate $1e-4$, batch size $256*4$ (with $4$ samples from dropout) and gradient clipping to $1$.
Since convergence of MI estimators is an important characteristic of the quality of the obtained values, we demonstrate here an example of one of the setups measurements.
The training data allows for a converged estimation, while test data is close to the convergence, but not very precise.
Unfortunately, estimation on one class, needed for conditional MI, cannot converge properly since amount of samples is too small.
\begin{figure}[!ht]
    \centering
         \begin{subfigure}[b]{0.3\textwidth}
         \centering
         \includegraphics[width=\linewidth]{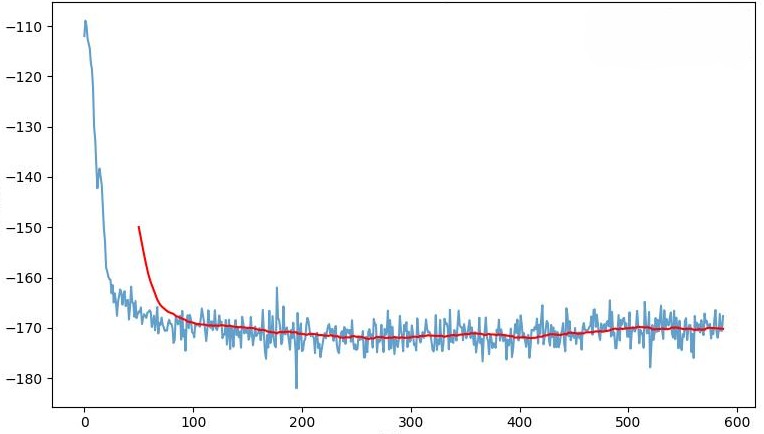}
         \caption{Train data}
     \end{subfigure}
     \hfill
              \begin{subfigure}[b]{0.3\textwidth}
         \centering
         \includegraphics[width=\linewidth]{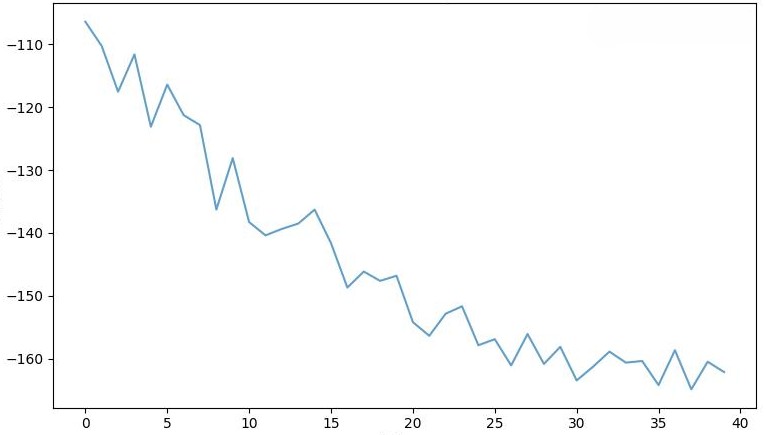}
         \caption{Test data}
     \end{subfigure}
     \hfill
              \begin{subfigure}[b]{0.3\textwidth}
         \centering
         \includegraphics[width=\linewidth]{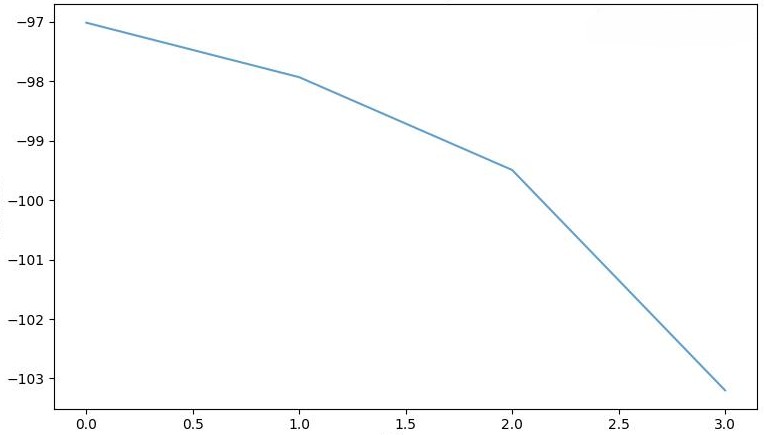}
         \caption{One class train data}
     \end{subfigure}
    \caption{Convergence of the estimator of $H(Z|X)$ for DoE on ResNet18 representations with CIFAR-10 data.}
     \label{fig:doe}
\end{figure}

\section{Theoretical Argument for Negative Correlation}
\label{sec:app:numerical_example}
We now provide some theoretical arguments that should intuitively explain the observed negative correlation between geometric and information-theoretic compression. To this end, we consider a simplified setting. 

We consider a set of points from $\mathcal{X}$ with corresponding classes from $\mathcal{Y}$.
We define a stochastic encoder, which maps inputs $X \in \mathcal{X}$ to a two-dimensional latent space $\mathcal{Z}$, encoding a datapoint $x$ with class $y$ to a random variable $Z\sim\mathcal{N}(\mu(x),\sigma^2(x))$, where:
\begin{enumerate}
    \item $\mu(x)$: The encoder is assumed to distribute all samples of a given class around a learned class centroid. While we assume that this mapping is bijective, we emulate the spread of the encoder output due to differences in the input via drawing from a Gaussian distribution. More specifically, we define the function $\mu$ via drawing, for each datapoint $(x,y)$, a sample from $\mathcal{N}(\mu_y,\sigma_z^2 I)$. Here, $\mu_y$ corresponds to the class centroids, while $\sigma_z^2$ corresponds to the spread of the encoder function $\mu$ (which we assume identical for every class, for the sake of simplicity).    
    \item $\sigma^2(x)=\sigma_n^2$: The encoder adds noise with a fixed variance to the latent representation. 
\end{enumerate}
    
Note that since the mapping from $X$ to $\mu(X)$ is bijective, its mutual information is infinite, also $I(X;Z)=I(\mu(X);Z)$.
Using the formula for the MI of jointly Gaussian random variables, we obtain
\begin{equation}
    I(X;Z|Y) = \log\left(1+\frac{\sigma_z^2}{\sigma_n^2}\right).
\end{equation}
Hence, large values of $\sigma_n^2$ lead to small values of $I(X;Z)$. For the clustering perspective, small values of $\sigma_z^2$ and $\sigma_n^2$ lead to latent representations that are strongly clustered in a geometric sense.

We simulate this system by drawing the class means $\mu_y$ from $\mathcal{N}(0,I)$ and varying the parameters $\sigma_z^2$ and $\sigma_n^2$ to study their influence on $NC$. To compensate for random effects, we repeat this procedure 50 times.

The resulting measurements are shown in Fig.~\ref{fig:theoretical}. They demonstrate that information-theoretic compression, i.e., lowering of $I(\mu(X), Z)$, can result from two different causes: 
\begin{enumerate}
    \item A large encoder noise variance $\sigma_n^2$ (lower left corner of Fig.~\ref{fig:theoretical}(a)), which indicates that noise causes class-specific distributions to overlap;
    \item A small variance $\sigma_z^2$ of the encoder mean (upper right corner of Fig.~\ref{fig:theoretical}(a)), which indicates that samples from the same class are mapped closely in latent space, at least via the deterministic part of $\mu(x)$ of the encoder.
\end{enumerate}
    
In contrast, the neural collapse measure $NC$ is small only if both $\sigma_n^2$ and $\sigma_z^2$ are small (upper right corner of Fig.~\ref{fig:theoretical}(b)).

While geometric compression thus aligns well with the concept of tight clusters, information-theoretic compression can have (at least) two causes: geometric compression (in the sense of tight clusters) and uninformative encoders (in the sense of strong encoder noise and/or underfitting). This resonates with the insights of \cite{kolchinskycaveats}, where it was shown that for deterministic classification problems the IB-optimal solutions do not have to be meaningful, but can be simply uninformative with a probability that depends on $\beta$.

Our example also resolves the apparent contradiction between  the negative correlation between geometric and information-theoretic compression observed in Section~\ref{sec:experiments}, and the positive correlation observed in the literature.
The correlation is negative if for a fixed spread $\sigma_z^2$ of the deterministic encoder function the added noise increases.
For example, in the CEB framework, the encoder maps inputs $X$ to latent representations $Z$ via a stochastic mapping with additive Gaussian noise, i.e., $Z = \mu(X) + D(X)$, where $D(X) \sim \mathcal{N}(0, \sigma^2(X) I)$. Stronger regularization in CEB (via a larger trade-off parameter) encourages the model to reduce the mutual information $I(X; Z)$ by increasing the conditional entropy $H(Z|X)$, which is practically achieved by increasing the variance $\sigma^2(X)$ of the injected noise. This leads to latent representations that are more dispersed or noisier and compress MI. In contrast, the correlation is positive if, for a fixed noise variance that allows for sufficient MI, the deterministic encoder is varied (as in~\cite{goldfeld2019estimating}). 

Since for a trainable stochastic encoder neither $\sigma_n^2$ nor $\sigma_z^2$ can change in isolation, the actual picture will be even more nuanced than described here.

\end{document}